\definecolor{cvprblue}{rgb}{0.21,0.49,0.74}
\newcounter{todocounter}
\DeclareMathOperator{\sgn}{sgn}
\newtheorem{lemma}{Lemma}
\newtheorem{definition}{Definition}
\newtheorem{assumption}{Assumption}
\newtheorem{remark}{Remark}
\newcommand{\eqautoref}[1]{\hyperref[#1]{Equation (\ref{#1})}}
\newcommand{\ineqautoref}[1]{\hyperref[#1]{Inequality (\ref{#1})}}
\newcommand{\algautoref}[1]{\hyperref[#1]{Algorithm~\ref{#1}}}
\newcommand{\apxautoref}[1]{\hyperref[#1]{Appendix Section~\ref{#1}}}
\title{Detecting Backdoor Attacks in Federated Learning via Direction Alignment Inspection}
\author{Jiahao Xu \hspace{3em} Zikai Zhang \hspace{3em} Rui Hu\\
University of Nevada, Reno \\
\texttt{\{jiahaox,zikaiz,ruihu\}@unr.edu}
}
\begin{document}
\maketitle
\begin{abstract}
The distributed nature of training makes Federated Learning (FL) vulnerable to backdoor attacks, where malicious model updates aim to compromise the global model's performance on specific tasks. Existing defense methods show limited efficacy as they overlook the inconsistency between benign and malicious model updates regarding both general and fine-grained directions. To fill this gap, we introduce AlignIns, a novel defense method designed to safeguard FL systems against backdoor attacks. AlignIns looks into the direction of each model update through a direction alignment inspection process. Specifically, it examines the alignment of model updates with the overall update direction and analyzes the distribution of the signs of their significant parameters, comparing them with the principle sign across all model updates. Model updates that exhibit an unusual degree of alignment are considered malicious and thus be filtered out. We provide the theoretical analysis of the robustness of AlignIns and its propagation error in FL. Our empirical results on both independent and identically distributed (IID) and non-IID datasets demonstrate that AlignIns achieves higher robustness compared to the state-of-the-art defense methods. The code is available at \url{https://github.com/JiiahaoXU/AlignIns}.
\end{abstract}    

%%%%%%%%% BODY TEXT
\section{Introduction}
Unlike traditional centralized training methods, which require gathering and processing all data at a central location such as a server, Federated Learning (FL)~\cite{FL_OG_shake}, as a decentralized training paradigm, allows a global model to learn from data distributed across various local clients, thereby achieving the goal of privacy-preserving. During training, the server distributes the global model to local clients, and each client trains the received global model using its local dataset, and then submits its local model update to the server for global model refinement. FL has been applied in various fields, including healthcare\cite{healthcare}, finance~\cite{fin1}, and remote sensing~\cite{remotesense}, where local data privacy is essential.

Although promising, the distributed nature of FL systems makes them vulnerable to a range of advanced poisoning attacks~\cite{bulyan, liu2022evil, tian2022comprehensive}. This vulnerability primarily stems from the server's lack of close monitoring of the local data and the training algorithm on clients. Consequently, this drawback allows attackers to compromise the data of local clients or interfere with the training algorithm, enabling them to inject malicious local model updates that distort the performance of the global model. For example, backdoor attacks~\cite{scaling, dba, badnet, Neurotoxin, PGD} have gained significant attention due to their stealthiness and practical effectiveness. In detail, backdoor attacks in FL seek to preserve the performance of the global model on clean inputs (\ie, the main task), while inducing the global model to make incorrect predictions on inputs that contain a certain predefined feature (\ie, the backdoor task). As backdoor attacks maintain the main task and the backdoor task simultaneously, the malicious local model updates are statistically similar to benign ones~\cite{PGD, flame} (\textit{poison-coupling effect}~\cite{lockdown}), making anomaly detection more challenging on the server side.

Existing defense methods (\ie, aggregation rule) usually aim to identify malicious model updates and filter them out to achieve better robustness by magnitude-based metrics extracted from local model updates (\eg, Manhattan distance~\cite{mm, mesas} and Euclidean distance~\cite{krum, bulyan}). However, magnitude-based metrics are ineffective in distinguishing stealthy backdoor attacks where benign and malicious model updates are usually similar in magnitude. Additionally, when the global model tends to converge, the magnitude of each model update becomes very small, making effective malicious manipulation on magnitude negligible. To this end, some works employ Cosine similarity to check the pair-wise directional information of model updates~\cite{Foolsgold, deepsight, flame}. However, pair-wise Cosine similarity between two model updates only captures their general directional similarity and overlooks fine-grained information (\eg, signs of parameters), resulting in limited robustness. In addition, in FL settings with non-IID data, the pair-wise Cosine similarity of model updates can be easily perturbed by the naturally diverse benign model updates. \textit{Furthermore, there is a deficiency in theoretical analysis within the literature concerning the effects of data heterogeneity on defense methods deployed by the server in FL.}

In this work, we propose a novel defense method designed to defend against backdoor attacks in FL, named \textbf{AlignIns} (Direction \underline{\textbf{Align}}ment \underline{\textbf{Ins}}pection), which examines local model updates for directional alignment at different granularity levels to identify malicious updates. Specifically, after receiving all model updates from clients, AlignIns evaluates each update by (1) inspecting temporal directional alignment with the global model of the latest round with \textit{Cosine similarity} and (2) assessing more fine-grained sign alignment with the principal sign across all updates with a novel metric \textit{sign alignment ratio}. Particularly, when calculating the sign alignment ratio, AlignIns focuses on the signs of important parameters in each update to accurately capture alignment information. Using these two directional metrics, AlignIns performs anomaly detection with the robust $\mathrm{MZ\_score}$ which requires minimal hyperparameters to filter updates with unusual directional patterns out. Finally, AlignIns clips the remaining updates to mitigate the impact of updates with abnormally large magnitudes. We also provide a theoretical analysis of AlignIns' robustness and its propagation error in FL. The main contributions of this work are three folds:
\begin{itemize}
    \item We present a novel defense method, AlignIns, to defend against backdoor attacks in FL. \textbf{\textit{To the best of our knowledge, AlignIns is the first defense method in FL that analyzes the directional patterns of local model updates at different levels of granularity.}} AlignIns is fully compatible with existing FL frameworks.
    \item \textbf{\textit{To the best of our knowledge, we provide the first theoretical robustness analysis for a filtering-based defense method against backdoor attacks under non-IID data in FL.}} 
    Moreover, we prove that the propagation error of AlignIns is bounded during the training of FL.
    \item We empirically evaluate the effectiveness of AlignIns through extensive experiments on both IID and non-IID datasets against various state-of-the-art (SOTA) backdoor attacks. Compared to existing SOTA defense methods, AlignIns exhibits superior robustness.
\end{itemize}

\section{Background and Related Works}

\textbf{Federated Learning.} In a typical FL system, a central server controls a set of $n$ clients to train a global model $\theta \in \mathbb{R}^d$ collaboratively. The objective of FL is to solve the following optimization problem:
$
    \min_{\theta} (1/n)\sum_{i=1}^n \mathcal{L}_i(\theta; \mathcal{D}_i),
$
where $\mathcal{L}_i(\cdot)$ denotes the learning objective specific to client $i$ and $\mathcal{D}_i$ denotes the local dataset for client $i$. The commonly used method to solve this problem iteratively is FedAvg~\cite{fedavg}. In detail, 
at round $t$ of FedAvg, each client $i\in[n]$ downloads the current global model $\theta^{t}$, updates it by optimizing its local objective, resulting in $\theta_i^{t}$, and transmits its model update $\Delta_i^t = \theta_i^{t} - \theta^{t}$ to the server. The server then refines the global model by averaging these updates as follows: $\theta^{t+1} = \theta^{t} + (1/n)\sum_{i=1}^n \Delta_i^t$. This process continues until the global model reaches convergence.

\textbf{Backdoor attacks in FL.} Empirical evidence has shown that FL is vulnerable to backdoor attacks~\cite{badnet, scaling, PGD, dba, Neurotoxin, f3ba, a3fl, iba, krauss2024automatic} due to its lack of access to local training data~\cite{scaling}. For instance,
\textit{Projected Gradient Descent} (PGD) attack~\cite{PGD} periodically projects the local model onto a small sphere centered around the global model from the previous training round, with a predefined radius. 
\textit{Distributed Backdoor Attack} (DBA)~\cite{dba} decomposes the centralized trigger into several smaller, distributed local triggers. Each poisoned client uses one of these local triggers, but during testing, the adversary injects the full trigger into the test samples. Recently, research has focused on trigger-optimization backdoor attacks~\cite{a3fl, f3ba, iba, lyu2023poisoning, alam2022perdoor}, which aim to search optimized triggers to enhance the effectiveness and stealthiness.

\textbf{Defending against backdoor attacks in FL.} \label{backdoor_defense} Generally, based on how defense methods mitigate the impact of malicious updates, existing defense methods can be categorized into \textit{filtering-based methods}~\cite{krum, deepsight, fltrust, mesas, mm, flame, lasa, masa} and \textit{influence-reduction methods}~\cite{geomed, rieger2022crowdguard, Foolsgold, rlr, lockdown}.

\textit{1) influence-reduction methods} aim to integrate all model updates but employ strategies to reduce the impact of malicious updates. For instance,
\textit{RFA}~\cite{geomed} is proposed to use the geometric median of local models as the aggregation result, under the assumption that malicious models significantly deviate from benign models. 
\textit{Foolsgold}~\cite{Foolsgold} assumes that the malicious updates are consistent with each other. It assigns aggregation weights to model updates based on the maximum Cosine similarity between the last layers of pairwise model updates. A higher Cosine similarity value indicates a higher probability that the updates are malicious, leading to smaller aggregation weights being assigned. 
The effectiveness of influence-reduction methods is inherently limited because they cannot eliminate the impact of malicious activity, leading to a significant risk of compromise.

\textit{2) Filtering-based methods} aim to detect and remove malicious local model updates before aggregation thus attempting to achieve the highest robustness. For example,
\textit{Multi-Krum}~\cite{krum} selects the multiply reliable local model updates for aggregation by identifying the one with the smallest sum of squared Euclidean distances to all other updates. 
\textit{Multi-Metrics}~\cite{mm} explores the combination of Manhattan distance, Euclidean distance, and Cosine similarity for each update to collaboratively filter out outliers. 
However, due to the dual objectives of backdoor attacks—that is, maintaining accuracy on the main task while maximizing accuracy on the backdoor task—malicious updates must mimic benign model updates, important weights for the main task typically have large values and can dominate the magnitude of a model update. As a result, magnitude-based detection methods become ineffective against backdoor attacks. Additionally, methods that rely solely on Cosine similarity also show limited effectiveness since they capture general directional alignment and overlook finer-grained information.

\section{Our Solution: AlignIns}

Our method, AlignIns, detailed in \algautoref{alg:main}, mitigates the impact of malicious updates through a two-step process. First, \textit{direction alignment inspection} is applied to examine each local model update comprehensively in terms of direction. Second, \textit{post-filtering model clipping} is used to further enhance the robustness of AlignIns on defending potential magnitude-based attack methods before final aggregation.

\setlength{\textfloatsep}{10pt}
\begin{algorithm}[t]
    \caption{AlignIns}
    \label{alg:main}
    \KwIn{Set of $n$ local model updates $\{\Delta_i^{t}\}_{i=1}^{n}$ where $m$ of them are malicious, global model $\theta^t$, $\mathrm{TDA}$ radius $\lambda_{c}$, $\mathrm{MPSA}$ radius $\lambda_{s}$, extraction parameter $k$}
    \KwOut{Aggregated model update $\widetilde{\Delta}$}
    
    Initialize benign set $\mathcal{S} \leftarrow \emptyset$\
    
     $ \omega \leftarrow \{\mathrm{TDA}(\Delta_i^{t}, \theta)\}_{i=1}^n \hfill\lhd$  by \eqautoref{tda} \label{alg:tda}
    
     $p  \leftarrow \sgn(\sum_{i=1}^n \sgn(\Delta_i^{t}))$ \label{alg_MPSA_1}
    
     $ \rho \leftarrow \{\mathrm{MPSA}(\Delta_i^{t}, p, k)\}_{i=1}^n \hfill\lhd$ by \eqautoref{local_sar} \label{alg_MPSA_2}
    
    \For{$i \in [n]$}{
         $\lambda_{i,c} \leftarrow \textit{$\mathrm{MZ\_score}$}(\omega_i, \omega) \hfill\lhd$ by \eqautoref{eq_mzscore} \label{alg_mzscore_1}
         
         $\lambda_{i,s} \leftarrow \textit{$\mathrm{MZ\_score}$}(\rho_i,\rho) \hfill\lhd$ by \eqautoref{eq_mzscore} \label{alg_mzscore_2}
         
        \If{$|\lambda_{i,c}| \leq \lambda_{c}$ and $|\lambda_{i,s}| \leq \lambda_s$}{\label{alg_mzscore_3}
            $\mathcal{S} \leftarrow \mathcal{S} \cup \{i\}$\ \label{alg_mzscore_4}
        }
    }
    
     $c \leftarrow \mathrm{med} (\{\| \Delta_i^{t} \|\}_{i \in \mathcal{S}})$  \label{alg_calculate_median}
     
     $ \widetilde{\Delta} \leftarrow (1 / |\mathcal{S}|)\sum_{i\in \mathcal{S}}\left({\Delta}_i^{t} \cdot \min\{1, c/\| \Delta_i^{t} \| \} \right)$ \label{alg_aggregate_with_rescaling}
    
    \Return $\widetilde{\Delta}$
\end{algorithm}

\textbf{Direction alignment inspection.} Existing defense methods against backdoor attacks in FL primarily focus on examining the magnitude (\eg, Manhattan distance and Euclidean distance) and the overall direction (\eg, Cosine similarity) of model updates. However, backdoor attacks are designed to maintain the main task accuracy, making the magnitude difference between malicious and benign updates nearly indistinguishable. Additionally, advanced attacks such as PGD~\cite{PGD} and Lie~\cite{lie} attacks are specifically crafted to bypass magnitude-based defenses. Therefore, AlignIns focuses on direction-based analysis to identify suspect updates, using two processes described below.

\textit{1) Temporal direction alignment checking:} Since malicious clients need to maintain both the main task and the backdoor task, the optimization direction of a malicious local model tends to deviate from that of benign models. AlignIns leverages this deviation and performs a Temporal Direction Alignment ($\mathrm{TDA}$) checking, which calculates the Cosine similarity between a local update and the latest global model (\autoref{alg:tda} in \algautoref{alg:main}) to assess the general alignment level of each local update. Formally, the $\mathrm{TDA}$ value $\omega_i$ of a local model update $\Delta_i^{t}$ is calculated as 
\begin{align}
    \omega_i \coloneqq \langle \Delta^{t}_i,\theta^t \rangle / (\|\Delta^{t}_i\| \|\theta^t\|). \label{tda}
\end{align}
We use local model updates rather than local models because our goal is to measure how closely each client's updates align with the direction of the global model. Local model updates specifically capture these incremental adjustments. 
% A higher $\mathrm{TDA}$ indicates a larger directional similarity between the local model update and the global model. 
Notably, malicious clients tend to exhibit similar $\mathrm{TDA}$ values, which differ from those of benign clients, creating an opportunity for detection. It is important to note that while the magnitude of model updates typically decreases as the global model converges, the $\mathrm{TDA}$ value does not follow the same trend. Consequently, magnitude-based anomaly detection becomes progressively less effective throughout training due to the decreasing magnitude. In contrast, the variability in $\mathrm{TDA}$ values continues to be useful for identifying malicious behavior.

\textit{2) Masked principal sign alignment checking:}
In backdoor attacks where the manipulations are stealthy, subtle malicious directional information can easily blend into the parameters of models with large magnitude, especially for models with large dimensions, which makes the $\mathrm{TDA}$ less useful under strong backdoor attacks since the $\mathrm{TDA}$ captures the overall directional information. Therefore, in addition to the $\mathrm{TDA}$, we look into the signs of parameters to provide a finer-grained directional assessment of local model updates. The signs of a vector represent its coordinate-wise direction. In the context of backdoor attacks, the distributions of the signs of malicious model updates differ from those of benign updates. This is particularly significant when the model is close to convergence, at which point the magnitude of model updates becomes very small, making large manipulations on magnitudes impractical. Therefore, manipulation of the direction, or the signs of parameters, can emerge as a more significant and effective strategy. Several works also utilize the signs of models for enhancing backdoor robustness. For example, RLR~\cite{rlr} assigns an opposite global learning rate to a coordinate of the averaged model update if the signs on this coordinate do not consistently align with the majority across all updates. SignGuard~\cite{signguard} calculates the proportions of positive, zero, and negative signs for each model update as the input of a clustering algorithm to identify malicious model updates. 
However, these methods utilize the signs of all parameters in the model update, regardless of their significance. Consequently, the performance of sign-based metrics can be significantly impacted by those many unimportant parameters, especially for large DNN models, leading to an inaccurate representation of the model update’s direction.

To this end, AlignIns utilizes a Masked Principle Sign Alignment ($\mathrm{MPSA}$) checking to inspect the sign alignment degree between the important parameters of each local update and a well-designed principle sign of all local updates. Specifically, to construct the principle sign over local updates, for each coordinate of local updates, we take the majority of the signs across all model updates as the principal sign of this coordinate, which can be mathematically formulated as 
% \begin{equation}
$
    p := \sgn\left(\sum_{i=1}^n \sgn(\Delta_i^{t})\right),\label{psa}
% \end{equation}
$
where $p \in \mathbb{R}^d$ represents the vector of principal signs and $\sgn(\cdot)$ is the function to take the signs of a vector. Note that the principal sign represents sign-voting results for each coordinate, making it stand for the major direction/dynamic for each coordinate. With this principle sign over local updates, we inspect the alignment of the signs of important parameters of each model update with it. More specifically, we use a Top-$k$ indicator defined as follows to identify the $k$ most important parameters that have the largest absolute values in a vector.  
\begin{definition}[Top-$k$ Indicator $\mathrm{Top_k} ( \cdot )$]\label{def:spar}
    For a vector $x \in \mathbb{R}^d$ and a masking parameter $k$, where $1\leq k \leq d$ , the Top-$k$ indicator $\mathrm{Top_k} ( \cdot )$: $\mathbb{R}^d \rightarrow \mathbb{R}^d$ is defined as $[\mathrm{Top_k} ( x)]_j = 1$ if $[x]_j \in \xi$ and $[\mathrm{Top_k} ( x )]_j = 0$ otherwise,
    % \begin{align}
    %     [\text{Top}_k(x)]_j \coloneqq 
    %     \begin{cases}
    %         1, & \text{if} \ [x]_j \in \omega; \\
    %         0, & \text{otherwise},
    %     \end{cases}
    % \end{align}
    where $\xi = \{|x_{\pi(1)}|, |x_{\pi(2)}|, \dots, |x_{\pi(k)}|\}$, here $\pi$ is a permutation of $[d]$ such that $|x_{\pi(i)}| \geq |x_{\pi(i+1)}|$ for all $1 \leq i <d$.
\end{definition}
The Top-$k$ indicator $\mathrm{Top_k} ( \cdot )$ takes each local model update as input and outputs a mask vector in which each element is either 1 or 0 with the same size as the input. To quantify the alignment in sign distributions of each local model update and the principle sign, we define the Sign Alignment Ratio ($\mathrm{SAR}$) as follows.
\begin{definition}[Sign Alignment Ratio]\label{def_sar}
    For vectors $x \in \mathbb{R}^d$ and $y \in \mathbb{R}^d$, the sign alignment ratio $\rho$ of $x$ to $y$ is defined as
% \begin{equation}
    $\rho := 1 - \|\sgn(x) - \sgn(y) \|_0/d$
% \end{equation}
where $\| \cdot \|_0$ is $L_0$-norm.
\end{definition}
Here, $ \rho\in[0,1]$ and a larger $\rho$ indicate a higher degree of alignment between the signs of $x$ and $y$. Combining $\mathrm{Top_k} ( \cdot )$ and $\mathrm{SAR}$, we have the $\mathrm{MPSA}$ value $\rho_i$ for local update $\Delta^{t}_i$ formulated as follows:
\begin{equation}
    \rho_i := 1 - \left\| \left(\sgn(\Delta_i^{t})  - p\right) \odot  \mathrm{Top}_k (\Delta_i^{t})\right\|_0 / k,\label{local_sar} 
\end{equation}
where $\odot$ is the Hadamard product, $\sgn(\Delta_i^{t}) - p$ computes a sign difference vector, capturing the difference between the sign of $\Delta_i^{t}$ and the principal reference sign $p$. Since $\mathrm{MPSA}$ checking focuses on the important parameters,  this difference vector is element-wise multiplied with the Top-$k$ mask derived from $\Delta_i^{t}$, effectively setting unimportant coordinates to zero. The $L_0$-norm is then applied to count the not-aligned elements and with the masking parameter $k$ to ultimately determine the $\mathrm{SAR}$. $\mathrm{MPSA}$ checking effectively reveals malicious local updates by combining both magnitude and directional information from model updates, allowing for clear differentiation between malicious and benign updates. AlignIns calculates the $\mathrm{MPSA}$ value for each update with the principal sign iteratively (\autoref{alg_MPSA_1}--\ref{alg_MPSA_2}) and forward them to the following anomaly detection process.

\textit{3) Efficient anomaly detection with $\mathrm{MZ\_score}$:} W apply robust filtering to remove updates with abnormal $\mathrm{TDA}$ and $\mathrm{MPSA}$ values. Specifically, 
we use the robust standardization metric named the \textit{Median-based Z-score} ($\mathrm{MZ\_score}$)~\cite{lasa, masa}, detailed in \autoref{def:mz}, which is a variant of the traditional \textit{Z-score} standardization metric.

\begin{definition}[$\mathrm{MZ\_score}$]\label{def:mz}
    For a set of values $X \coloneqq \{x_1, \dots, x_n \}$ with median $\mathrm{med}(X)$ and standard deviation $\sigma$, the $\mathrm{MZ\_score}$ $\lambda_i$ of any $x_i \in X$ is defined as 
    \begin{equation}
    % $
    \lambda_i := (x_i - \mathrm{med}(X)) / \sigma. 
    % $ 
    \label{eq_mzscore}
\end{equation}
\end{definition}
$\mathrm{MZ\_score}$ calculates the number of standard deviations an element is from the median, which may be either positive or negative. In AlignIns, the $\mathrm{MZ\_score}$s for $\mathrm{TDA}$ and $\mathrm{MPSA}$ values are computed for each local update (\autoref{alg_mzscore_1}--\ref{alg_mzscore_2}). Those with high absolute $\mathrm{MZ\_score}$s (\ie, outliers) are excluded using two predetermined filtering radii: $\lambda_c$ for $\mathrm{TDA}$ and $\lambda_s$ for $\mathrm{MPSA}$ (\autoref{alg_mzscore_3}--\ref{alg_mzscore_4}). The use of the $\mathrm{MZ\_score}$ allows for the adaptation to the varying range of $\mathrm{TDA}$ and $\mathrm{MPSA}$ values during training, requiring only minimal hyper-parameters. Additionally, by configuring $\lambda_c$ and $\lambda_s$, we can manage the trade-off between the robustness and main task accuracy of AlignIns. For example, when robustness is the primary concern in the FL, choosing small $\lambda_c$ and $\lambda_s$ values is essential to attain the highest robustness. 

\textbf{Post-filtering model clipping.}
After filtering, the remaining clients, considered benign, are included in the set $\mathcal{S}$ (\autoref{alg_mzscore_4}) and contribute to the model averaging process. 
However, since our filtering primarily focuses on the direction of model updates (although $\mathrm{MPSA}$ does consider magnitude when using the Top-$k$ indicator), there is a risk that it might overlook updates with large magnitudes, such as those updates generated by Scaling attack~\cite{scaling}. To this end, AlignIns re-scales model updates in $\mathcal{S}$ by using the median of the $L_2$-norms of these updates as a clipping threshold and aggregates the clipped model updates as the global model update $\widetilde{\Delta}$ (\autoref{alg_calculate_median}--\ref{alg_aggregate_with_rescaling}). It is worth noting that performing clipping before filtering does not affect the filtering results. However, clipping after filtering enhances robustness, as the clipping threshold is more likely determined by benign updates. We discuss the computational cost of AlignIns and compare it with other baselines in \apxautoref{apdx: computational_cost}.

\section{Robustness and Propagation Error Analysis}

In this section, we conduct a theoretical analysis of the robustness of AlignIns, as well as its propagation error in FL. Before presenting our theoretical results, we make the following assumptions. Note that \autoref{ass1}--\ref{ass2} are commonly used in the theoretical analysis of distributed learning systems~\cite{hu2023federated, signguard, nesterov2018lectures}. \autoref{ass3} states a standard measure of inter-client heterogeneity in FL~\cite{kappa, karimireddy2021byzantine, el2021collaborative}. This heterogeneity complicates the problem of FL with backdoor adversaries, as it may cause the server to confuse malicious updates with flawed model updates from benign clients holding outlier data points~\cite{kappa}. 

% Recall that H denotes the set of indices for honest workers.
\begin{assumption}[$\mu$-smoothness~\cite{nesterov2018lectures}] \label{ass1}
    Each local objective function $\mathcal{L}_{i}$ for benign client $i\in\mathcal{B}$ is $\mu$-Lipschitz smooth with $\mu >0$, \ie, for any $x,y \in \mathbb{R}^d$,
    % \begin{align*}
    $
        \left \| \nabla \mathcal{L}_{i}(x) - \nabla \mathcal{L}_{ i}(y) \right\| \leq \mu \left \| x-y \right\|, \forall i\in\mathcal{B},
    $
    % \end{align*}
    which further gives:
    % \begin{align*}
    $
        \mathcal{L}_{i}(x) - \mathcal{L}_{ i}(y) \leq \nabla \mathcal{L}_{ i}(x)^T(y-x) + (\mu/2)\left \| x-y \right \|^2, \forall i\in\mathcal{B}.
    $
    % \end{align*}
\end{assumption}

\begin{assumption}[Unbiased gradient and bounded variance]\label{ass2}
    The stochastic gradient at each benign client is an unbiased estimator of the local gradient, \ie, $\mathbb{E}[g_i(x)] = \nabla \mathcal{L}_i(x)$ and has bounded variance, \ie, for any $ x\in\mathbb{R}^d$,
% \begin{align*}
$
    \mathbb{E} \left\|g_i(x) - \nabla \mathcal{L}_{i}(x)) \right\|^2 \leq \nu_i^2, \forall i\in\mathcal{B},
    $
% \end{align*}$
where the expectation is over the local mini-batches. We also denote
$\bar{\nu} \coloneqq \left(1/ |\mathcal{B}|\right)\sum_{i\in\mathcal{B}}\nu^2_i$ for convenience.
\end{assumption}

\begin{assumption}[Bounded heterogeneity]\label{ass3} There exist a real value $\bar{\zeta}$ such that for any $x \in \mathbb{R}^d$,
% \begin{align*}
$
    (1/|\mathcal{B}|)\sum_{i\in\mathcal{B}}\left \| \nabla \mathcal{L}_{i}(x) - \nabla \mathcal{L}_{\mathcal{B}}(x) \right \|^2 \leq \bar{\zeta},
% \end{align*}
$
where the $\nabla \mathcal{L}_\mathcal{B}(x) \coloneqq \left(1/ |\mathcal{B}|\right)\sum_{i\in\mathcal{B}}\mathcal{L}_{i}(x)$.
\end{assumption}
% \begin{remark}

Note that these assumptions apply to benign clients only since malicious clients do not need to follow the prescribed local training protocol of FL. 

\subsection{Robustness Analysis of AlignIns}
% We first analyze the robustness of our robust aggregation rule, AlignIns. 
To theoretically evaluate the efficacy of a filtering-based defense method like AlignIns, we introduce the concept of $\kappa$-robust filtering~\cite{lasa} as defined in \autoref{f_kappa_def}. Note that \autoref{f_kappa_def} is similar to \textit{$(f, \kappa)$-robustness} defined in~\cite{kappa, allouah2023robust}, \textit{$(\delta_{\max}, c)$-ARAgg} defined in~\cite{karimireddy2021byzantine, gorbunov2022variance, malinovsky2023byzantine}, and \textit{$(f, \lambda)$-resilient averaging} defined in~\cite{farhadkhani2022byzantine}. Our robustness definition adopts a constant upper bound and focuses on quantifying the distance between the output of a filtering-based defense method and the average of all benign updates, which represents the optimal output of such a rule.
% by modifying the $(f, \kappa)$-robustness used in~\cite{kappa, allouah2023robust}.

\begin{definition}[$\kappa$-robust filtering~\cite{lasa}] \label{f_kappa_def} 
% Let $n> 1$ and $0 \leq m < n/2$. 
A filtering-based aggregation rule $F\colon \mathbb{R}^{d\times n} \rightarrow \mathbb{R}^d$ is called $\kappa$-robust if for any vectors $ \{ x_1, \dots, x_n \} \in \mathbb{R}^d$ and a benign set $\mathcal{B} \subseteq [n]$ of size $n - m$, the output $\hat{x} \coloneqq F(x_1, \dots, x_n)$ satisfies
% \begin{align}
$
    \left \| \hat{x} - \bar{x}_\mathcal{B} \right \|^2 \leq \kappa,
% \end{align}
$
where $\bar{x}_\mathcal{B} \coloneqq (1/|\mathcal{B}|) \sum_{i\in \mathcal{B}} x_i$, and $\kappa \geq 0$ refers to the \textit{robustness coefficient} of $F$.
\begin{remark}
The $\kappa$-robust filtering guarantees that the error of a filtering-based aggregation rule in estimating the average of the benign inputs is upper-bounded by a constant $\kappa$. This measure provides a quantitative way to assess the robustness of the filtering-based aggregation rule. A smaller $\kappa$ indicates a smaller discrepancy between the empirical output and the optimal output of $F$. If $F$ identifies and removes all malicious inputs and keeps all benign inputs, we have $\kappa = 0$, achieving the highest level of robustness.
\end{remark}
\end{definition}

Based on \autoref{f_kappa_def}, we prove that the proposed AlignIns, when applied to $n$ input models, of which $m$ are malicious, satisfies $\kappa$-robust filtering with $\kappa = O(1 + m / (n-2m))$, as stated in \autoref{lemma:AlignInskappa}. 

\begin{lemma}[$\kappa$-robustness of AlignIns] \label{lemma:AlignInskappa}
% \xu{For AlignIns with $n$ and $f$:}
Under \autoref{ass2}--\ref{ass3}, assume $n> 1$, $ 0 \leq m < n/(3+\epsilon)$ with a positive constant $\epsilon$, AlignIns satisfies $\kappa$-robust filtering with 
\begin{align*}
    \kappa & = \left( 1 + m / \left( n-2m \right) \right) \left( \left(2/\epsilon + 1\right) \left(2\bar{\nu} + \bar{\zeta}\right) + 8 c^2 \right) \\
     &= O\left( 1 + m / \left( n-2m \right)  \right), \label{ourkappa_order}
\end{align*}
if the local learning rate satisfies $\eta\leq 1/2\tau$ and there exist two sufficiently large filtering radii such that $|\mathcal{S}| \geq n-2m$. Here, $\bar{\nu}$ and $\bar{\zeta}$ represent the gradient variance and local divergence, respectively; $c$ is the clipping threshold.
    \begin{proof}
    The proof is given in \apxautoref{apdx: proof_lemma_1}.
\end{proof}
\begin{remark}
% \textbf{
% (1) Impact of filtering radii.} % donot use bold here
The condition on $\mathcal{S}$ highlights the importance of selecting appropriate filtering radii. These radii cannot be zero or too small; otherwise, only the median or a few model updates will be averaged to update the global model. This can lead to a performance drop due to the lack of model updates. Moreover, the model clipping threshold $c$ can effectively control the magnitude of potential malicious updates in the selection set, thus preventing $\kappa$ from exploding due to updates with large magnitudes. Indeed, in the literature, model clipping has demonstrated its effectiveness in mitigating the impact of malicious model updates~\cite{zhang2019gradient, sparsefed, signguard}. In addition, the result also shows the importance of reducing the gradient variance of stochastic gradient and local heterogeneity to enhance robustness performance. Our work is orthogonal to existing variance or divergence reduction methods~\cite{gorbunov2022variance, malinovsky2022variance} and can be combined with them to further improve the robustness. 
We argue that AlignIns enjoys comparable robustness with several classical defense methods, for example, non-filtering-based method RFA~\cite{geomed} ($O(1 + m / (n-2m))^2$), and filtering-based method Krum~\cite{krum} ($O(1 + m / (n-2m))$)\footnote{Results of RFA and Krum are taken from~\cite{kappa}. Note that the definition of $\kappa$ in~\cite{kappa} is different from ours, but the difference part can be reduced to a constant bound. Therefore, we can safely incorporate these results into our discussion without losing generality.}. 

\end{remark}
\end{lemma}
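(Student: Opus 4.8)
The plan is to bound $\mathbb{E}\|\widetilde{\Delta}-\bar{x}_{\mathcal{B}}\|^{2}$, where $\widetilde{\Delta}=\frac{1}{|\mathcal{S}|}\sum_{i\in\mathcal{S}}\bar{\Delta}_{i}$ is the output of \algautoref{alg:main}, $\bar{\Delta}_{i}:=\Delta_{i}^{t}\min\{1,c/\|\Delta_{i}^{t}\|\}$ is the post-filtering clipped update (so $\|\bar{\Delta}_{i}\|\le c$ for every $i\in\mathcal{S}$), and $\bar{x}_{\mathcal{B}}=\frac{1}{|\mathcal{B}|}\sum_{i\in\mathcal{B}}\Delta_{i}^{t}$ is the benign mean of \autoref{f_kappa_def} (the statement, like \autoref{f_kappa_def}, is read in expectation over the local mini-batches, since $\bar{\nu}$ is a variance). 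First I would record the combinatorics of the selected set: since $|\mathcal{B}|=n-m$ and the hypothesis gives $|\mathcal{S}|\ge n-2m$, the benign-and-selected set $\mathcal{G}:=\mathcal{S}\cap\mathcal{B}$ satisfies $|\mathcal{G}|\ge|\mathcal{S}|-m\ge n-3m$, which is strictly positive because $m<n/(3+\epsilon)$, while $|\mathcal{S}\setminus\mathcal{B}|\le m$. The two cardinality ratios that will surface in $\kappa$ are $|\mathcal{B}|/|\mathcal{S}|\le(n-m)/(n-2m)=1+m/(n-2m)$, which is $\ge 1$ and governs conversions between $\mathcal{B}$-averages and $\mathcal{S}$-averages (the prefactor of $\kappa$), and $|\mathcal{B}|/|\mathcal{G}|\le(n-m)/(n-3m)<2/\epsilon+1$ — using $n-3m>\epsilon m$ — which quantifies the loss when a $\mathcal{B}$-average is estimated by a $\mathcal{G}$-average; finally $m/(n-2m)<1$ (as $n>3m$), which later turns a quadratic-in-$m$ adversarial contribution into a linear one.

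Next I would split, writing $\bar{\Delta}_{\mathcal{G}}:=\frac{1}{|\mathcal{G}|}\sum_{i\in\mathcal{G}}\bar{\Delta}_{i}$ and the clipped benign mean $\bar{\Delta}^{c}_{\mathcal{B}}:=\frac{1}{|\mathcal{B}|}\sum_{i\in\mathcal{B}}\bar{\Delta}_{i}$,
\begin{align*}
\widetilde{\Delta}-\bar{x}_{\mathcal{B}}={}&\frac{1}{|\mathcal{S}|}\sum_{i\in\mathcal{S}\setminus\mathcal{B}}\bar{\Delta}_{i}+\frac{|\mathcal{G}|}{|\mathcal{S}|}\bigl(\bar{\Delta}_{\mathcal{G}}-\bar{\Delta}^{c}_{\mathcal{B}}\bigr)\\
&{}+\Bigl(\frac{|\mathcal{G}|}{|\mathcal{S}|}-1\Bigr)\bar{\Delta}^{c}_{\mathcal{B}}+\bigl(\bar{\Delta}^{c}_{\mathcal{B}}-\bar{x}_{\mathcal{B}}\bigr),
\end{align*}
and bound the four pieces. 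The first (adversarial residual) has norm at most $\frac{m}{|\mathcal{S}|}c\le\frac{m}{n-2m}c$ by $\|\bar{\Delta}_{i}\|\le c$; the third likewise has norm at most $\frac{|\mathcal{S}\setminus\mathcal{B}|}{|\mathcal{S}|}\|\bar{\Delta}^{c}_{\mathcal{B}}\|\le\frac{m}{n-2m}c$, and with $m/(n-2m)<1$ their squares are each $O\bigl(\frac{m}{n-2m}c^{2}\bigr)$, the source of the $8c^{2}$ term. The second piece, $\bar{\Delta}_{\mathcal{G}}-\bar{\Delta}^{c}_{\mathcal{B}}$, is a ``benign subset versus all benign'' discrepancy; by Jensen its square is at most $\frac{|\mathcal{B}|}{|\mathcal{G}|}\cdot\frac{1}{|\mathcal{B}|}\sum_{i\in\mathcal{B}}\|\bar{\Delta}_{i}-\bar{\Delta}^{c}_{\mathcal{B}}\|^{2}$, and by the $1$-Lipschitzness of the clipping map this is controlled by the empirical benign dispersion $\frac{1}{|\mathcal{B}|}\sum_{i\in\mathcal{B}}\|\Delta_{i}^{t}-\bar{x}_{\mathcal{B}}\|^{2}$ times $|\mathcal{B}|/|\mathcal{G}|<2/\epsilon+1$. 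The fourth (clip-versus-unclip) piece I would handle through $\|\bar{\Delta}_{i}-\Delta_{i}^{t}\|=(\|\Delta_{i}^{t}\|-c)^{+}$, Jensen, and the observation that, since $\|\bar{\Delta}^{c}_{\mathcal{B}}\|\le c$, this excess is dominated by the deviation of $\Delta_{i}^{t}$ from the benign mean, again reducing it to the benign dispersion. Thus everything collapses to (i) the clipping scale $c$ and (ii) a bound on the benign dispersion, weighted by the cardinality ratios above.

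The remaining ingredient — and the step I expect to be the main obstacle — is showing $\frac{1}{|\mathcal{B}|}\sum_{i\in\mathcal{B}}\mathbb{E}\|\Delta_{i}^{t}-\bar{x}_{\mathcal{B}}\|^{2}\le 2\bar{\nu}+\bar{\zeta}$ up to constant factors that the learning-rate condition $\eta\le1/(2\tau)$ absorbs. Here \autoref{ass2} and \autoref{ass3} enter: a benign update is the accumulation over the $\tau$ local steps of stochastic gradients, so after adding and subtracting the per-client full gradients and the benign-average gradient, the zero-mean stochastic part contributes the variance $\bar{\nu}$ (by \autoref{ass2} and independence of mini-batches across clients) and the deterministic part contributes the heterogeneity $\bar{\zeta}$ (by \autoref{ass3}), while $\mu$-smoothness together with $\eta\le1/(2\tau)$ keeps the local iterates from drifting so that the single-step estimate propagates to the full $\tau$-step update without amplification. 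Assembling the pieces — adversarial residual and the $(\frac{|\mathcal{G}|}{|\mathcal{S}|}-1)\bar{\Delta}^{c}_{\mathcal{B}}$ term at $O\bigl(\frac{m}{n-2m}c^{2}\bigr)$, the within-benign and clip-gap terms at $O\bigl((2/\epsilon+1)(2\bar{\nu}+\bar{\zeta})\bigr)$ — and folding constants with the outer ratio $1+m/(n-2m)$ gives $\kappa=\bigl(1+m/(n-2m)\bigr)\bigl((2/\epsilon+1)(2\bar{\nu}+\bar{\zeta})+8c^{2}\bigr)$. The genuinely delicate point — as opposed to the cardinality bookkeeping — is that the reference $\bar{x}_{\mathcal{B}}$ is the \emph{unclipped} benign mean whereas the output is clipped: making the clip-versus-unclip comparison tight, so that the mismatch is reabsorbed into the dispersion and $c^{2}$ terms rather than leaking an uncontrolled $(\|\bar{x}_{\mathcal{B}}\|-c)^{+}$, is where the argument needs the most care, and it is precisely where performing the clipping \emph{after} filtering — so that $c$ is determined mostly by benign updates — is used.
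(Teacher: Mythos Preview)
Your route and the paper's share the same three ingredients: (i) the benign-dispersion bound $\frac{1}{|\mathcal{B}|}\sum_{i\in\mathcal{B}}\mathbb{E}\|\Delta_i-\bar{x}_{\mathcal{B}}\|^2\le 2\bar{\nu}+\bar{\zeta}$ under $\eta\le 1/(2\tau)$ (this is exactly the paper's \autoref{local_divergence}, proved by the add--and--subtract you sketch), (ii) the clipping bound $\|\bar{\Delta}_i-\bar{\Delta}_j\|\le 2c$ for the adversarial residual, and (iii) the cardinality ratios $\frac{n-m}{n-2m}=1+\frac{m}{n-2m}$ and $\frac{|\mathcal{R}|}{|\mathcal{P}|}\le\frac{1}{\epsilon}$ (your $|\mathcal{B}|/|\mathcal{G}|\le 2/\epsilon+1$ is the same estimate in disguise). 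So the skeleton is identical.

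The packaging differs. The paper does not use your four-term additive split; instead it applies Jensen once, $\|\widetilde{\Delta}-\bar{x}_{\mathcal{B}}\|^2\le\frac{1}{|\mathcal{S}|}\sum_{i\in\mathcal{S}}\|\Delta_i-\bar{x}_{\mathcal{B}}\|^2$, and then case-splits on whether $\mathcal{S}\subseteq\mathcal{B}$. In the nontrivial case it writes $\mathcal{S}=\mathcal{P}\cup\mathcal{R}$ (your $\mathcal{G}$ and $\mathcal{S}\setminus\mathcal{B}$), routes the $\mathcal{R}$-summands through $\Delta^t_{\mathcal{P}}$ via $\|\Delta_i-\Delta_{\mathcal{P}}\|^2\le\frac{1}{|\mathcal{P}|}\sum_{j\in\mathcal{P}}\|\Delta_i-\Delta_j\|^2\le 4c^2$, and bounds the $\mathcal{P}$-summands directly by extending the sum to $\mathcal{B}$ and invoking \autoref{local_divergence}. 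This is shorter, but it silently identifies the clipped updates in $\mathcal{S}$ with the unclipped updates that \autoref{local_divergence} controls --- exactly the clip-versus-unclip mismatch you single out as ``the genuinely delicate point.'' Your decomposition is more honest about that bias term; the paper simply does not isolate it. What the paper's route buys is brevity and slightly cleaner constants (the $8c^2$ falls out without the extra factor from a four-term Cauchy--Schwarz); what yours buys is a proof that does not conflate the clipped output with the unclipped reference, at the cost of the additional clip-gap argument you describe.
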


\subsection{Propagation Error of AlignIns in FL}
Based on the $\kappa$-robustness of AlignIns, we analyze its \textit{propagation error} during training. Specifically, let $\theta$ denote the model trained with Fed-AlignIns under backdoor attacks, where $m$ of the $n$ clients are malicious, and let $\theta^{*}$ denote a model trained exclusively with benign clients using FedAvg. Starting from the same initial model $\theta^0$, we aim to measure the difference between these two models after $T$ rounds of training, defined as
$
\| \theta^T - \theta^{T, *} \|,
$
referred to as the propagation error~\cite{sparsefed}. Let $\theta^{t, +}$ represent the output of AlignIns at the $t$-th round. If the highest level of robustness is not achieved at round $t$, the error
$
\|  \theta^t - \theta^{t, +} \|
$
will propagate to the next round, resulting in a shifted starting point for local SGD at round $t+1$. This discrepancy will gradually widen the gap between $\theta$ and $\theta^{*}$. Our analysis captures this robustness error at each round and examines its cumulative effect after $T$ rounds. In \autoref{lemma:AlignInscertified_radius}, we show that, assuming \autoref{ass1}--\ref{ass3} hold, the propagation error of AlignIns remains bounded.

\begin{lemma}[Bounded Propagation Error] \label{lemma:AlignInscertified_radius}
% \xu{For AlignIns with $n$ and $f$:}
Let \autoref{ass1}--\ref{ass3} hold. If the local learning rate $\eta \leq 1/2\tau$, the propagation error of AlignIns is bounded as 
    \begin{align*}
        \| \theta^T - \theta^{T, *} \| \leq \phi(T)(2 + 3\mu^2 )^{\phi(T) }(\kappa + 2 \bar{\nu}),
    \end{align*}
    under backdoor attacks where $m$ out of $n$ clients are malicious. Here, 
    % $n \geq 1$ and  $0 \leq m < n/2$,
    $\kappa$ is given in \autoref{lemma:AlignInskappa}, $\phi(T) = \sum^T_{t=1} (\alpha^t)^2$ is the cumulative global learning rate, and $\alpha^t$ is a global learning rate scheduler, possibly static.
    \begin{proof}
    The detailed proof is in \apxautoref{apdx: proof_of_pro_error}.
\end{proof}

\begin{remark}
When $T \rightarrow \infty$, $\phi(T)$ converges to a constant for learning rate schedulers like exponential decay, which implies a constant bounded on propagation error. The result shows that besides the robustness error bounded by $\kappa$, the error of local gradient estimation, which is bounded by $\bar{\nu}$, in local SGD also propagates during the training, increasing the overall propagation error. This is because at any round $t$, if the benign starting point for local training is the same, \ie, $\theta^t=\theta^{t, *}$, then the local gradients/model updates on $\theta^t$ and $\theta^{t, *}$ will be identical for benign clients. Therefore, the gap between the updated global models $\theta^{t+1}$ and $\theta^{t+1, *}$ solely depends on the robustness error (\ie, the effectiveness of AlignIns in filtering out malicious updates). However, if $\theta^t\neq\theta^{t, *}$, which means $\theta^t$ is not benign and has been poisoned in previous rounds, the local gradients/model updates on $\theta^t$ and $\theta^{t, *}$ will differ for benign clients, resulting in an error bounded by the gradient variance, even if AlignIns successfully filters out all malicious updates. Hence, to further reduce the propagation error, AlignIns can be combined with variance-reduction methods like~\cite{gorbunov2022variance, malinovsky2022variance}, which is orthogonal to AlignIns.
\end{remark}
\end{lemma}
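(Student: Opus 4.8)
The plan is to set up a one-step recursion for the quantity $e_t := \| \theta^t - \theta^{t,*} \|$ and then unroll it over $T$ rounds. First I would write both updates explicitly: $\theta^{t+1} = \theta^t + \alpha^t \widetilde\Delta^t$ where $\widetilde\Delta^t$ is the output of AlignIns applied to the $n$ updates computed on $\theta^t$, and $\theta^{t+1,*} = \theta^{t,*} + \alpha^t \bar\Delta_{\mathcal B}^{t,*}$ where $\bar\Delta_{\mathcal B}^{t,*}$ is the average of the benign updates computed on $\theta^{t,*}$. The natural decomposition is to insert the average of the benign updates computed on the \emph{poisoned} model $\theta^t$, call it $\bar\Delta_{\mathcal B}^t$, as an intermediate term:
\begin{align*}
e_{t+1} \le e_t + \alpha^t \| \widetilde\Delta^t - \bar\Delta_{\mathcal B}^t \| + \alpha^t \| \bar\Delta_{\mathcal B}^t - \bar\Delta_{\mathcal B}^{t,*} \|.
\end{align*}
The first bracketed term is exactly the $\kappa$-robustness error from \autoref{lemma:AlignInskappa}, so it is bounded (after taking expectations or using the high-probability version) by $\sqrt{\kappa}$ — or, keeping things in squared form, contributes $\kappa$. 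The second term measures how far the benign aggregate drifts when the starting point moves from $\theta^{t,*}$ to $\theta^t$; this is where \autoref{ass1} ($\mu$-smoothness) and \autoref{ass2} (bounded variance $\bar\nu$) enter.

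For that drift term I would expand each benign update as $\eta$ times a sum of $\tau$ local stochastic gradient steps, and bound $\| \bar\Delta_{\mathcal B}^t - \bar\Delta_{\mathcal B}^{t,*} \|$ by (i) a deterministic piece controlled via $\mu$-smoothness, giving something like $(\text{const}\cdot\mu)\, e_t$ after using $\eta \le 1/(2\tau)$ to tame the accumulation of inner steps, plus (ii) a stochastic mismatch piece controlled by $\bar\nu$. Squaring, applying Young's inequality to split cross terms, and folding the heterogeneity and variance constants together, one arrives at a recursion of the schematic form
\begin{align*}
e_{t+1}^2 \le \bigl(2 + 3\mu^2\bigr)(\alpha^t)^2\, e_t^2 + (\alpha^t)^2 (\kappa + 2\bar\nu),
\end{align*}
matching the shape of the claimed bound. (The precise constants $2$, $3$, and the exact way $\bar\nu$ is attached come out of the Young's-inequality bookkeeping; I would not grind through them here.)

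Finally I would unroll this linear recursion. Since $\theta^0 = \theta^{0,*}$ we have $e_0 = 0$, so $e_T^2 \le (\kappa + 2\bar\nu)\sum_{t=1}^T (\alpha^t)^2 \prod_{s>t}(2+3\mu^2)(\alpha^s)^2$, and bounding each product crudely by $(2+3\mu^2)^{\phi(T)}$ and the number of terms by $\phi(T)$ (after using $\phi(T) = \sum_t (\alpha^t)^2$ to absorb the learning-rate factors) yields $e_T \le \phi(T)(2+3\mu^2)^{\phi(T)}(\kappa + 2\bar\nu)$, which is the stated claim. The main obstacle I anticipate is step two: carefully bounding the benign-aggregate drift $\| \bar\Delta_{\mathcal B}^t - \bar\Delta_{\mathcal B}^{t,*} \|$ through $\tau$ coupled inner SGD steps, because the local iterates on the two trajectories diverge step by step and one must use $\mu$-smoothness recursively within the inner loop (and the condition $\eta \le 1/(2\tau)$) to keep that inner divergence from blowing up the per-round contraction factor; getting the clean $(2+3\mu^2)$ constant rather than something exponential in $\tau$ is the delicate part.
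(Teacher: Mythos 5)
Your proposal follows essentially the same route as the paper's proof: the same insertion of the benign average $\bar\Delta_{\mathcal B}^t$ computed on the poisoned iterate, the same use of the $\kappa$-robustness bound from \autoref{lemma:AlignInskappa} for the first term, the same smoothness-plus-variance treatment of the benign-aggregate drift yielding a per-round recursion of the stated shape, and the same unrolling (the paper uses Bernoulli's inequality to collapse the product into $(2+3\mu^2)^{\phi(T)}$). One small note: the "delicate" inner-loop coupling you anticipate is not actually carried out in the paper — it applies $\mu$-smoothness only at the outer iterates $\theta^t,\theta^{t,*}$ and absorbs the $\tau$ local steps via $\eta\le 1/(2\tau)$, and its contraction coefficient is $(2+3(\alpha^t)^2\mu^2)$ rather than $(2+3\mu^2)(\alpha^t)^2$, but these are bookkeeping differences, not a difference of method.
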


\section{Experimental Settings}
% \subsubsection{Datasets and Models}
\textit{Datasets: } 
In our experiments, we primarily use  CIFAR-10~\cite{cifar10_100} and CIFAR-100~\cite{cifar10_100} datasets to evaluate the performance of various defense methods. Additionally, we present the superior performance of AlignIns on other benchmark datasets (MNIST~\cite{mnist}, FMNIST~\cite{fmnist}, and Sentiment140~\cite{sentiment140}) in \apxautoref{apdx: more_dataset}. For all datasets, we simulate a cross-silo FL system with $20$ clients. \textit{Additionally, we also present the superior performance of AlignIns on a cross-device FL system with $100$ clients and client sampling.} We consider both IID and non-IID settings. For IID settings, we distribute the training data evenly to local clients. 
For non-IID settings, we follow~\cite{hsu2019measuring, lockdown, mm} to use \textit{Dirichlet distribution} $Dir(\beta)$ to simulate the non-IID settings with a default non-IID degree $\beta=0.5$.

\textit{Learning Settings: } 
We use SGD as the local solver, with the initial learning rates set as $\alpha=1.0$ and $\eta=0.1$ and the number of local training epochs set as $2$. The number of training rounds is set as $T=100$ for CIFAR-100 and $T=150$ for CIFAR-10. For AlignIns, the default filtering radii are set as $\lambda_c = 1.0$ and $\lambda_s = 1.0$. We conduct extensive experiments to study the impact of filtering radii and present results and analysis in \apxautoref{apdx:impact_of_filter_radii}.
 The default masking parameter is set as $k=0.3 \times d$, where $d$ is the model dimension so that the Top-$30$\% of model parameters are used for the $\mathrm{MPSA}$ checking.

\textit{Evaluated Attack Methods: }
We consider $5$ SOTA backdoor attacks, including \textit{Badnet}~\cite{badnet}, \textit{DBA}~\cite{dba}, \textit{Scaling}~\cite{scaling}, \textit{PGD}~\cite{PGD}, and \textit{Neurotoxin}~\cite{Neurotoxin}. We provide the detailed attack model and settings for attack methods in \apxautoref{apdx: attack_model}--\ref{apdx: attack_setting}. 
We present the empirical performance of AlignIns under the strong \textit{trigger-optimization attack}~\cite{f3ba} in \apxautoref{apdx: triggeroptimization}. Moreover, we study the potential \textit{adaptive attacks} tailored to AlignIns and \textit{untargeted attacks}~\cite{signguard} in \apxautoref{apdx: adaptive_attack}--\ref{apdx: untargeted_attack}, although these are beyond the primary scope of this work.
To simulate effective backdoor attacks (achieving a BA over $60\%$~\cite{mesas}), the malicious client will poison $r=50\%$ of its local data, where $r$ represents the \textit{data poisoning ratio}. The \textit{attack ratio} is set to $20$\% by default, which means $20$\% of the clients in the system are malicious. Experiments of AlignIns on defending backdoor attacks with various attack ratios are given in \apxautoref{apdx: attack_ratio}.

\begin{table*}[htbp]
  \centering
  \caption{The clean MA, BA, and RA results of baselines and AlignIns on IID CIFAR-10 and CIFAR-100 datasets. Results are shown in $\%$.}
  \label{tab:main_results}
  \renewcommand{\arraystretch}{1.1}
  \scalebox{0.78}{
    \begin{tabular}{c|c|c|cccc|cccc|cccc|cc}
    \toprule
    \multirow{3}[6]{*}{\textbf{\makecell*[c]{Dataset \\ (Model)}}} & \multirow{3}[6]{*}{\textbf{Methods}} & \multirow{3}[6]{*}{\textbf{\makecell*[c]{Clean \\ MA$\uparrow$}}} & \multicolumn{4}{c|}{\textbf{Badnet}}     & \multicolumn{4}{c|}{\textbf{DBA}}     & \multicolumn{4}{c|}{\textbf{Neurotoxin}} & \multirow{3}[6]{*}{\textbf{\makecell*[c]{Avg. \\ BA$\downarrow$ }}} & \multirow{3}[6]{*}{\textbf{\makecell*[c]{Avg. \\ RA$\uparrow$ }}}\\
\cmidrule(r){4-7} \cmidrule(r){8-11} \cmidrule(r){12-15}         &       &       & \multicolumn{2}{c}{BA$\downarrow$} & \multicolumn{2}{c|}{RA$\uparrow$} & \multicolumn{2}{c}{BA$\downarrow$} & \multicolumn{2}{c|}{RA$\uparrow$} & \multicolumn{2}{c}{BA$\downarrow$} & \multicolumn{2}{c|}{RA$\uparrow$} \\
\cmidrule(r){4-5} \cmidrule(r){6-7} \cmidrule(r){8-9} \cmidrule(r){10-11} \cmidrule(r){12-13}  \cmidrule(r){14-15}        &       &       & $r$=0.3 & $r$=0.5 & $r$=0.3 & $r$=0.5 & $r$=0.3 & $r$=0.5 & $r$=0.3 & $r$=0.5 & $r$=0.3 & $r$=0.5 & $r$=0.3 & $r$=0.5  \\
    \midrule
    \multirow{9}[4]{*}{\rotatebox{90}{\makecell*[c]{CIFAR-10 \\ (ResNet9~\cite{he2016deep})}}} & FedAvg & 89.47 & 51.56 & 67.61 & 45.79 & 31.24 & 56.21 & 70.42 & 40.62 & 27.92 & 44.89 & 79.40 & 50.41 & 19.60 & 61.68 & 35.93 \\
          & FedAvg* & 89.47 & 2.06  & 2.06  & 85.60 & 85.60 & 2.06  & 2.06  & 85.60 & 85.60 & 2.06  & 2.06  & 85.60 & 85.60 & 2.06 & 85.60\\
\cmidrule{2-17}          & RLR   & 79.16 & \underline{2.32}  & \textbf{2.00} & 76.72 & 73.33 & 3.01  & 3.04  & 77.09 & 77.13 & 3.12  & 3.87  & 73.98 & 73.29 & \underline{2.89} & 35.93\\
          & RFA   & 87.73 & 70.67 & 90.24 & 27.74 & 9.26  & 47.67 & 66.97 & 47.29 & 30.14 & 81.27 & 96.13 & 17.11 & 3.69 & 75.49 & 22.54\\
          & MKrum & 87.02 & 81.10 & 97.47 & 18.11 & 2.51  & \underline{2.17}  & \underline{4.33}  & \underline{83.89} & \underline{79.10} & 65.28 & 89.18 & 31.81 & 10.01 & 56.59 & 37.57 \\
          & Foolsgold & 89.49 & 69.14 & 68.84 & 29.64 & 30.10 & 51.18 & 60.73 & 44.83 & 36.08 & \underline{2.91} & \underline{2.82}  & \underline{85.27} & \underline{84.76} & 42.60 & 51.78\\
          & MM    & 89.15 & 41.19 & 93.88 & 53.88 & 6.01  & 52.24 & 51.30 & 43.54 & 45.08 & 43.92 & 83.92 & 51.12 & 15.11 & 61.08 & 35.79\\
          & Lockdown & 88.56 & 6.31  & 10.82 & \underline{81.88} & \underline{79.50 }& 11.63 & 6.03  & 78.82 & 75.77 & 3.40  & 3.27  & 82.73 & 83.14 & 6.91 & \underline{80.31}\\
          & \cellcolor[rgb]{ .816,  .808,  .808}AlignIns & \cellcolor[rgb]{ .816,  .808,  .808}88.64 & \cellcolor[rgb]{ .816,  .808,  .808}\textbf{1.91} & \cellcolor[rgb]{ .816,  .808,  .808}\underline{2.21} & \cellcolor[rgb]{ .816,  .808,  .808}\textbf{86.03} & \cellcolor[rgb]{ .816,  .808,  .808}\textbf{85.57} & \cellcolor[rgb]{ .816,  .808,  .808}\textbf{2.13} & \cellcolor[rgb]{ .816,  .808,  .808}\textbf{2.14} & \cellcolor[rgb]{ .816,  .808,  .808}\textbf{85.77} & \cellcolor[rgb]{ .816,  .808,  .808}\textbf{85.88} & \cellcolor[rgb]{ .816,  .808,  .808}\textbf{2.66} & \cellcolor[rgb]{ .816,  .808,  .808}\textbf{2.20} & \cellcolor[rgb]{ .816,  .808,  .808}\textbf{85.46} & \cellcolor[rgb]{ .816,  .808,  .808}\textbf{85.31} & \cellcolor[rgb]{ .816,  .808,  .808}\textbf{2.21} & \cellcolor[rgb]{ .816,  .808,  .808}\textbf{85.67}\\
    \midrule
       
         \multicolumn{1}{c@{}}{} & \multicolumn{1}{c}{} & \multicolumn{1}{c}{} &  &  &  & \multicolumn{1}{c}{} &  &  &  & \multicolumn{1}{c}{} &  & & & \multicolumn{1}{c}{} & &  \vspace{-14pt} \\
    \midrule
    \multirow{9}[4]{*}{\rotatebox{90}{\makecell*[c]{CIFAR-100 \\ (VGG9~\cite{simonyan2014very})}}} & FedAvg & 64.29 & 99.20 & 99.54 & 0.68  & 0.35  & 99.25 & 99.36 & 0.64  & 0.54  & 94.41 & 93.36 & 4.36  & 5.28 & 97.52 & 1.98\\
          & FedAvg* & 64.29 & 0.62  & 0.62  & 53.03 & 53.03 & 0.62  & 0.62  & 53.03 & 53.03 & 0.62  & 0.62  & 53.03 & 53.03 & 0.62 & 53.03\\
\cmidrule{2-17}          & RLR   & 44.34 & 96.57 & 99.85 & 1.81  & 0.12  & 24.41 & 94.08 & 24.97 & 3.22  & \textbf{0.04} & \textbf{0.00} & 29.07 & 29.73 & 52.49 & 14.82\\
          & RFA   & 53.92 & 4.32  & \underline{1.45}  & 37.60 & 39.88 & 2.15  & \underline{0.78}  & \underline{39.73} & \underline{41.51} & 99.74 & 89.59 & 0.21  & 6.59 & 33.01 & 27.59\\
          & MKrum & 51.28 & \underline{1.33}  & 1.54  & \underline{38.13} & 38.49 & \underline{1.36}  & 1.54  & 37.85 & 37.91 & 99.82 & 99.87 & 0.12  & 0.10 & 36.21 & 25.49\\
          & Foolsgold & 64.13 & 99.02 & 99.30 & 0.83  & 0.57  & 99.15 & 99.39 & 0.74  & 0.51  & 21.79 & 6.21  & 42.06 & 46.40 & 70.81 & 15.19\\
          & MM    & 63.26 & 99.51 & 99.87 & 0.37  & 0.11  & 99.53 & 99.70 & 0.35  & 0.19  & 98.48 & 98.97 & 1.32  & 0.83 & 99.34 & 0.53\\
          & Lockdown & 62.88 & 55.21 & 24.14 & 28.45 & \underline{43.06} & 34.37 & 49.02 & 34.06 & 27.93 & 0.85  & 0.67  & \underline{42.66} & \underline{47.04} & \underline{27.38} & \underline{37.20}\\
          & \cellcolor[rgb]{ .816,  .808,  .808}AlignIns & \cellcolor[rgb]{ .816,  .808,  .808}63.45 & \cellcolor[rgb]{ .816,  .808,  .808}\textbf{0.79} & \cellcolor[rgb]{ .816,  .808,  .808}\textbf{0.71} & \cellcolor[rgb]{ .816,  .808,  .808}\textbf{50.45} & \cellcolor[rgb]{ .816,  .808,  .808}\textbf{51.53} & \cellcolor[rgb]{ .816,  .808,  .808}\textbf{0.45} & \cellcolor[rgb]{ .816,  .808,  .808}\textbf{0.57} & \cellcolor[rgb]{ .816,  .808,  .808}\textbf{50.81} & \cellcolor[rgb]{ .816,  .808,  .808}\textbf{52.08} & \cellcolor[rgb]{ .816,  .808,  .808}\underline{0.49} & \cellcolor[rgb]{ .816,  .808,  .808}\underline{0.53} & \cellcolor[rgb]{ .816,  .808,  .808}\textbf{51.11} & \cellcolor[rgb]{ .816,  .808,  .808}\textbf{50.66} & \cellcolor[rgb]{ .816,  .808,  .808}\textbf{0.59} & \cellcolor[rgb]{ .816,  .808,  .808}\textbf{51.11} \\
    \bottomrule
    \end{tabular}%
    }
    \vspace{-10pt}
  \label{tab:addlabel}%
\end{table*}%

\textit{Evaluated Defense Methods: }
We present the detailed defense model in \apxautoref{apdx: defense_model}. We comprehensively compare AlignIns with the non-robust baseline FedAvg and six existing SOTA defense methods, including \textit{RLR}~\cite{rlr}, \textit{RFA}~\cite{geomed}, \textit{Multi-Krum (MKrum)}~\cite{krum}, \textit{Foolsgold}~\cite{Foolsgold}, \textit{Multi-Metric (MM)}~\cite{mm}, and \textit{Lockdown}~\cite{lockdown}.
Additionally, we compare our approach with an ideally perfect filtering-based robust aggregation, \textit{FedAvg*}, which is assumed to perfectly identify and remove all malicious updates and average all the benign updates to update the global model.

\textit{Evaluation Metrics: }
We use three metrics to evaluate the performance of defense methods, including \textbf{main task accuracy (MA)}, which measures the percentage of clean test samples that are accurately classified to their ground truth labels by the global model; \textbf{backdoor attack accuracy (BA)}, which measures the percentage of triggered samples that are misclassified to the target label by the global model; and \textbf{robustness accuracy (RA)}, which measures the percentage of triggered samples that are accurately classified to their ground-truth labels by the global model, despite the presence of the trigger. A good defense method should achieve high MA and RA and low BA.

\section{Experimental Results}
\textbf{Main results in IID setting.} In \autoref{tab:main_results}, we report the performance of various defense methods under no attack (denoted by ``Clean''), Badnet, DBA, and Neurotoxin attacks for IID CIFAR-10 and CIFAR-100. The best results are highlighted in \textbf{bold font}, and the second best results are \underline{underlined}. Overall, \textbf{\textit{AlignIns demonstrates superior performance compared with other baselines as it achieves the best average BA and RA over three attack methods.}} Specifically, for CIFAR-10, while RLR offers a satisfactory degree of robustness (an average BA of $2.89\%$), it suffers from a notable decline in RA, with an average reduction of $49.74\%$ in comparison to AlignIns. This drop results from RLR's strategy of flipping the global learning rate for parameters in the aggregated model update that are inconsistent with the majority's sign, consequently resulting in the loss of benign local parameters. AlignIns, however, demonstrates outstanding performance with consistently low BA and high RA, ranking first or second among its counterparts. Notably, compared to the second-best results, AlignIns achieves an average improvement of $+0.68\%$ in BA and $+5.36\%$ in RA.
Similarly, superior results are observed in CIFAR-100 experiments, where AlignIns significantly outperforms other methods in both BA and RA. These results underscore AlignIns' effectiveness as a promising defense method for protecting FL from various backdoor attacks, significantly enhancing the trustworthiness of FL systems.

\begin{figure}[t]
    \centering
    \includegraphics[width=0.90\linewidth]{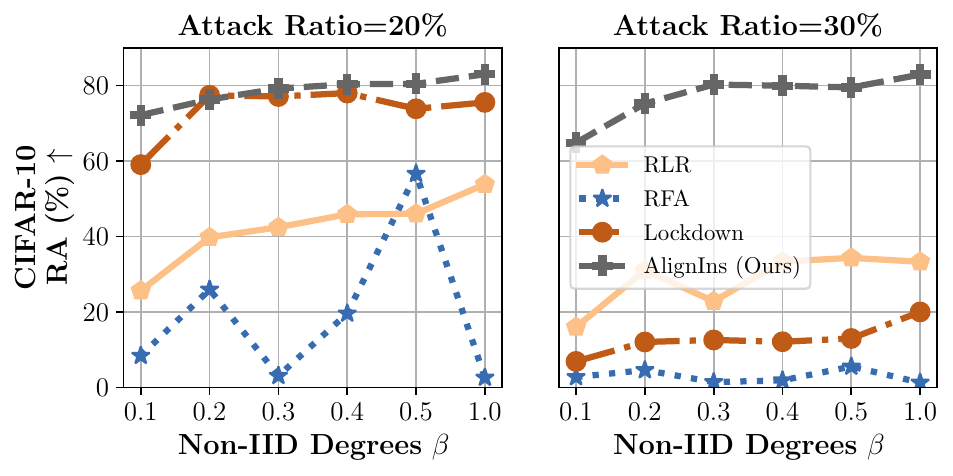}
    \vspace{-5pt}
    \caption{RA of AlignIns under various non-IID degrees, compared with Lockdown, RFA, and RLR under Neurotoxin .}
    \label{fig:noniid}
    \vspace{-5pt}
\end{figure}
\textbf{Effectiveness under various Non-IID degrees.}
We examine the defense performance of AlignIns across various degrees of non-IIDness, a factor that significantly complicates backdoor defense. \autoref{fig:noniid} presents the RA of AlignIns under different non-IID conditions on the CIFAR-10 dataset, compared with Lockdown, RFA, and RLR. The experiments were conducted using the Neurotoxin attack, with both a default attack ratio of $20\%$ and a higher attack ratio of $30\%$.
The Dirichlet parameter $\beta$ varies from $0.1$ to $1.0$, where a smaller $\beta$ suggests a more intense non-IIDness. We observe that only AlignIns consistently attains robustness against strong Neurotoxin attacks with a varying $\beta$. 
Specifically, as $\beta$ increases, the RA of AlignIns, Lockdown, and RLR increases correspondingly. However, AlignIns outperforms them with a consistently higher RA. When the attack ratio rises to $30\%$, RLR, RFA, and Lockdown fail to provide satisfactory robustness. However, our method AlignIns still demonstrates its robustness under various non-IIDness, even in an extremely non-IID case when $\beta=0.1$. AlignIns is designed to examine the alignment of model updates on important parameters only, hence, it mitigates the challenge of identifying malicious model updates in non-IID settings where updates are heterogeneous, thereby achieving superior performance in even extreme non-IID settings compared with existing methods. We also provide more comprehensive results of AlignIns and other baselines on non-IID datasets in \apxautoref{apdx: comprehensive_result_on_non_iid_dataset}.

\textbf{Effectiveness of AlignIns in cross-device FL with client sampling.}
\label{cross_device_exp}
While most of our experiments focus on the cross-silo FL setting, evaluating the cross-device FL scenario is also essential given the large number of clients involved. For this purpose, we simulate a cross-device FL environment with $100$ clients, where the server randomly selects $20$ clients per round for training. We conduct experiments on IID and non-IID CIFAR-10 cases using Foolsgold, Lockdown, RLR, and AlignIns and summarize the MA, BA, and RA results in \autoref{tab: cross_device}. The results show that both Foolsgold and Lockdown completely lose their effectiveness in both cases, achieving an average RA of nearly $0.00\%$. RLR achieves a moderate level of backdoor robustness but at the cost of main task accuracy, with an average MA of only $49.19\%$. In contrast, AlignIns performs robustly in the cross-device FL setting, achieving a significantly lower BA in both IID ($0.92\%$) and non-IID ($1.90\%$) cases compared with other methods. Furthermore, AlignIns achieves an average RA of $79.28\%$. These results highlight AlignIns's ability to maintain both accuracy and robustness in challenging cross-device FL scenarios, underscoring its adaptability and effectiveness in real-world applications.

\begin{table}[t]
  \centering
  \caption{Performance of different methods in cross-device FL settings on IID and non-IID CIFAR-10 datasets under Badnet attack.}
  \scalebox{0.74}{
    \begin{tabular}{c|cccccc|c}
    \toprule
    \multirow{2}[4]{*}{\textbf{Method}} & \multicolumn{3}{c}{\textbf{CIFAR-10 (IID)}} & \multicolumn{3}{c|}{\textbf{CIFAR-10 (Non-IID)}} & \multirow{2}[2]{*}{\textbf{\makecell*[c]{Avg. \\ RA$\uparrow$}}}\\
    \cmidrule(r){2-4} \cmidrule(r){5-7} 
         & MA$\uparrow$    & BA$\downarrow$    & RA$\uparrow$    & MA$\uparrow$    & BA$\downarrow$    & RA$\uparrow$ \\
    \midrule
    Foolsgold &   82.99  &  99.99   &  0.01    &   67.97 & 99.99  &  0.00 &  0.01 \\
    Lockdown &   \underline{83.52}  &  99.99   &  0.00    &   \underline{73.91} & 99.92  &  0.06 &  0.00 \\
    RLR &   56.81 & \underline{4.67} & \underline{55.38} & 41.56 & \underline{14.12} & \underline{38.17} & \underline{46.78}\\
    \rowcolor[rgb]{ .816,  .808,  .808}AlignIns &   \textbf{85.01}  &  \textbf{0.92}   &   \textbf{82.74}   &  \textbf{79.51}   & \textbf{1.90}  &  \textbf{75.81} &  \textbf{79.28} \\
    \bottomrule
    \end{tabular}%
}
    \vspace{-5pt}
  \label{tab: cross_device}%
\end{table}%

\textbf{Ablation study of AlignIns.}
\label{sec:ablation}
As AlignIns consists of two alignment components ($\mathrm{TDA}$ and $\mathrm{MPSA}$) to improve backdoor robustness, we conduct a detailed ablation study to investigate how each component functions. Experimental results on IID and non-IID CIFAR-10 datasets under Badnet attack are summarized in \autoref{tab:ablation}. \textbf{\textit{(i) Component ablation.}} We observe that using $\mathrm{MPSA}$ or $\mathrm{TDA}$ alone in IID scenarios only slightly reduces robustness compared to AlignIns, as benign updates follow consistent patterns that enable effective detection by a single metric. In non-IID settings, however, where local updates diverge, neither $\mathrm{MPSA}$ nor $\mathrm{TDA}$ alone provides sufficient robustness. When combined, $\mathrm{MPSA}$ and $\mathrm{TDA}$ improve BA and RA from $94.07\%$ and $5.79\%$ to $47.04\%$ and $45.30\%$, respectively, showing their complementary strengths. AlignIns further enhances robustness by integrating $\mathrm{MPSA}$, $\mathrm{TDA}$, and post-filtering model clipping, which normalizes benign update magnitudes and improves malicious update detection, yielding the highest average RA. \textbf{\textit{(ii) Masking parameter $k$ ablation.}} We try to involve more non-essential parameters in the $\mathrm{MPSA}$ checking by using the Top-$50$\%$/70$\% of parameters to calculate $\mathrm{MPSA}$ values. By doing so, the effectiveness of malicious identification is reduced. In contrast, when using the Top-$30$\% of parameters, compared to the Top-$50$\% case, BA and RA are improved by $+30.89\%$ and $+25.34\%$, respectively. This demonstrates the effectiveness of focusing important parameters when calculating $\mathrm{MPSA}$ in improving the filtering accuracy, especially in non-IID cases. \textbf{\textit{(iii) Variance reduction method further enhances robustness.}} Our theoretical results reveal the impact of variance reduction techniques on improving the robustness of AlignIns and reducing the propagation error of AlignIns in FL, we additionally test a variant of AlignIns named ``AlignIns$^+$'', in which local SGD with momentum is used to reduce the local gradient variance with momentum coefficient $0.1$. AlignIns$^+$ achieves a slightly better performance than AlignIns, verifying our theoretical results.

\begin{table}[t]
  \centering
  \caption{Performance of different components in AlignIns.}
  \scalebox{0.70}{
    \begin{tabular}{c|ccc|ccc|c}
    \toprule
    \multirow{2}[4]{*}{\textbf{Configuration}} & \multicolumn{3}{c}{\textbf{CIFAR-10 (IID)}} & \multicolumn{3}{c|}{\textbf{CIFAR-10 (non-IID)}} & \multirow{2}[2]{*}{\textbf{\makecell*[c]{Avg. \\ RA$\uparrow$}}} \\
\cmidrule(r){2-4} \cmidrule(r){5-7}          & MA$\uparrow$    & BA$\downarrow$    & RA$\uparrow$    & MA$\uparrow$    & BA$\downarrow$    & RA$\uparrow$ \\
    \midrule
    $\mathrm{MPSA}$($30\%$) & \underline{88.55} & 2.88  & 85.02 & 80.65 & 94.07 & 5.79 & 45.41 \\
    $\mathrm{TDA}$   & \textbf{88.56} & 3.82  & 83.88 & \underline{83.86} & 77.58 & 21.31 & 52.60 \\
    $\mathrm{MPSA}$($70\%$+$\mathrm{TDA}$ & 88.14 & \underline{2.18}  & \underline{85.77} & 83.84 & 61.83 & 31.86 & 58.82\\
    $\mathrm{MPSA}$($50\%$)+$\mathrm{TDA}$ & 88.05 & 2.21  & 85.46 & \textbf{84.12} & 77.93 & 19.96 & 52.71 \\
    $\mathrm{MPSA}$($30\%$)+$\mathrm{TDA}$ & 88.14 & \textbf{2.04} & \textbf{85.82} & 83.65 & \underline{47.04} & \underline{45.30} & \underline{65.56} \\
    \rowcolor[rgb]{ .816,  .808,  .808}AlignIns  & 88.05 & 2.44  & 85.27 & 82.88 & \textbf{1.70} & \textbf{81.32} & \textbf{83.30} \\
    \midrule
    AlignIns$^+$  & 88.48 & 2.14  & 85.74 & 83.31 & 1.11 & 82.13 & 83.94\\
    \bottomrule
    \end{tabular}%
    }
    \vspace{-1pt}
  \label{tab:ablation}%
\end{table}%

\section{Conclusion}
This paper introduces a novel defense method AlignIns to defend against backdoor attacks in FL. AlignIns examines each model update's direction at different granularity levels, thus effectively identifying stealthy malicious local model updates and filtering them out to avoid them participating in aggregation in FL to enhance robustness. We provide a theoretical analysis of AlignIns' robustness and its impact on propagation errors in FL. Extensive experiments demonstrate the effectiveness of AlignIns, with results showing that it outperforms SOTA defense methods against various advanced attacks.

\clearpage
{
\small
    \bibliographystyle{ieeenat_fullname}
    \bibliography{main}}

% WARNING: do not forget to delete the supplementary pages from your submission 
\clearpage
\maketitlesupplementary

\section{Attack Model and Detailed Attack Settings}

\subsection{Attack Model} 
\label{apdx: attack_model}
We follow the threat model in previous works~\cite{deepsight, krum, dnc}. Specifically, the attacker controls $m$ malicious clients, which can be fake injected into the system by the attacker or benign clients compromised by the attacker. These malicious clients are allowed to co-exist in the FL system. \textbf{\textit{i) Attacker's goal.}}
The backdoor attackers in FL have two primary objectives. First, they aim to maintain the accuracy of the global model on benign inputs, ensuring that its overall performance remains unaffected. Second, they seek to manipulate the global model so that it behaves as predefined by the attacker on inputs containing a specific trigger, such as misclassifying triggered inputs to a specific backdoor label.  \textbf{\textit{ii) Attacker's capability.}}
The attacker controls $m$ malicious clients in FL. We consider three levels of the attacker's capability in manipulating their model updates, including \textit{weak level}, \textit{median level}, and \textit{strong level}. The malicious clients controlled by weak attackers (e.g., Badnet~\cite{badnet} and DBA~\cite{dba}) are only able to manipulate their local datasets to generate malicious local model updates and send them to the server for aggregation. For a median attacker, malicious clients can additionally modify the training algorithm (e.g., Scaling~\cite{scaling} and PGD~\cite{PGD}) to generate malicious local model updates. These two assumptions are common in existing works for attackers who control malicious devices but do not have access to additional information from servers or benign clients. For a strong attacker (e.g., Neurotoxin~\cite{Neurotoxin}), it can access and leverage the global information from the server to improve the attack. Note that the defense method employed by the server is confidential to the attacker.

\begin{figure}
    \centering
    \includegraphics[width=\linewidth]{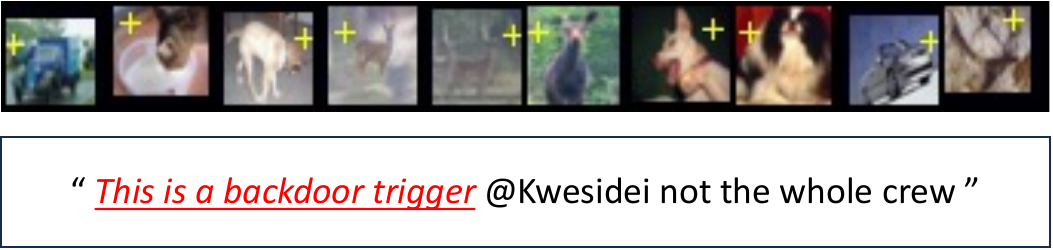}
    % \vspace{-9pt}
    \caption{Illustration of backdoor triggers used in evaluation.}
    % \Description{xx}
    \label{fig:backdoor}
    % \vspace{-9pt}
\end{figure}

\subsection{More Detailed Settings of Attack Methods.}
\label{apdx: attack_setting}
For image datasets, we add a ``plus'' trigger to benign samples to generate the poisoned data samples. For Sentiment140 dataset, we insert a trigger sentence ``This is a backdoor trigger'' into benign samples to generate poisoned data samples. The example of triggered data samples in CIFAR-10 and Sentiment140 are shown in \autoref{fig:backdoor}. For DBA attack, we decompose the "plus" trigger into four local patterns, and each malicious client only uses one of these local patterns. For Scaling attack, we use a scale factor of $2.0$ to scale up all malicious model updates. For PGD attack, malicious local models are projected onto a sphere with a radius equal to the $L_2$-norm of the global model in the current round for all datasets, except CIFAR-10 where we make the radius of the sphere be 10 times smaller than the norm. For Neurotoxin attack, malicious model updates are projected to the dimensions that have Bottom-$75\%$ importance in the aggregated update from the previous round.

\section{Defense Model}
\label{apdx: defense_model}
In this work, we assume the server to be the defender. \textbf{\textit{i) Defender's goal.}}
% A defense method against backdoor attacks aims to mitigate the malicious impact of malicious updates. 
As stated in~\cite{fltrust}, an ideal {defense method} against poisoning attacks in FL should consider the following three aspects: \textit{Fidelity}, \textit{Robustness}, and \textit{Efficiency}. To ensure fidelity, the {defense method} does not significantly degrade the global model's performance on benign inputs, thus preserving its effectiveness. For robustness, the {defense method} should successfully mitigate the impact of malicious model updates, limiting the global model's malicious behavior on triggered inputs. Regarding efficiency, the {defense method} should be computationally efficient, ensuring that it does not hinder the overall efficiency of the training process. \textit{In this work, we assume that the server aims to achieve the highest level of robustness by removing all malicious updates without significant computational complexity and accuracy degradation on benign inputs.} \textbf{\textit{ii) Defender's capability.}}
In FL, the server has no access to the local datasets of clients, but it has the global model and all the local model updates. We assume the server has no prior knowledge of the number of malicious clients. We also assume that each client transmits their local update anonymously, making the actions of individual clients untraceable. Additionally, the server does not know the specifics of backdoor attacks, such as the type of trigger involved. To defend against backdoor attacks, the server will apply a robust aggregation rule $F$ to the local model updates received from clients and generate an aggregated model update at each training round. 

\section{More Superior Results of AlignIns}

\subsection{Comprehensive Results on non-IID Datasets}
\label{apdx: comprehensive_result_on_non_iid_dataset}
% Table generated by Excel2LaTeX from sheet 'Sheet3'

\begin{table*}[htbp]
  \centering
  \caption{The MA, BA, and RA results of baselines and AlignIns on non-IID CIFAR-10 and CIFAR-100 datasets. Results are shown in $\%$.}
  \label{tab:main_results_extra}
  \renewcommand{\arraystretch}{1.1}
  \scalebox{0.78}{
    \begin{tabular}{c|c|c|cccc|cccc|cccc|cc}
      \toprule
      \multirow{3}[6]{*}{\textbf{\makecell*[c]{Dataset \\ (Model)}}} & \multirow{3}[6]{*}{\textbf{Methods}} & \multirow{3}[6]{*}{\textbf{\makecell*[c]{Clean \\ MA$\uparrow$}}} & \multicolumn{4}{c|}{\textbf{Badnet}} & \multicolumn{4}{c|}{\textbf{DBA}} & \multicolumn{4}{c|}{\textbf{Neurotoxin}} & \multirow{3}[6]{*}{\textbf{\makecell*[c]{Avg. \\ BA$\downarrow$}}} & \multirow{3}[6]{*}{\textbf{\makecell*[c]{Avg. \\ RA$\uparrow$}}} \\
      \cmidrule(r){4-7} \cmidrule(r){8-11} \cmidrule(r){12-15}
      & & & \multicolumn{2}{c}{BA$\downarrow$} & \multicolumn{2}{c|}{RA$\uparrow$} & \multicolumn{2}{c}{BA$\downarrow$} & \multicolumn{2}{c|}{RA$\uparrow$} & \multicolumn{2}{c}{BA$\downarrow$} & \multicolumn{2}{c|}{RA$\uparrow$} \\
      \cmidrule(r){4-5} \cmidrule(r){6-7} \cmidrule(r){8-9} \cmidrule(r){10-11} \cmidrule(r){12-13} \cmidrule(r){14-15}
      & & & $r$=0.3 & $r$=0.5 & $r$=0.3 & $r$=0.5 & $r$=0.3 & $r$=0.5 & $r$=0.3 & $r$=0.5 & $r$=0.3 & $r$=0.5 & $r$=0.3 & $r$=0.5 \\
      \midrule
      \multirow{9}[4]{*}{\rotatebox{90}{\makecell*[c]{CIFAR-10 \\ (ResNet9)}}} 
      & FedAvg & 85.05 & 42.34 & 86.33 & 51.60 & 13.22 & 42.24 & 71.64 & 49.63 & 25.26 & 42.29 & 76.63 & 48.76 & 20.73 & 53.57 & 36.29 \\
      & FedAvg* & 85.05 & 1.78 & 1.78 & 83.09 & 83.09 & 1.78 & 1.78 & 83.09 & 83.09 & 1.78 & 1.78 & 83.09 & 83.09 & 1.78 & 83.09 \\
      \cmidrule{2-17}
      & RLR & 59.87 & \underline{3.27} & \textbf{0.94} & 55.54 & 55.53 & \underline{1.98} & \underline{1.87} & 59.98 & 59.52 & \textbf{0.21} & \textbf{0.27} & 45.60 & 46.02 & \underline{1.92} & 53.04 \\
      & RFA & 79.80 & 56.26 & 97.42 & 36.49 & 2.30 & 53.70 & 90.70 & 39.00 & 8.10 & 4.29 & 22.26 & 71.93 & 56.60 & 50.27 & 39.36 \\
      & MKrum & 70.89 & 72.70 & 95.57 & 20.98 & 3.71 & 2.12 & 53.81 & 69.80 & 35.09 & \underline{1.18} & \underline{1.22} & 74.02 & 71.08 & 49.58 & 37.78 \\
      & Foolsgold & 85.97 & 20.24 & 83.27 & 68.91 & 16.14 & 42.20 & 63.56 & 50.79 & 31.62 & 3.77 & 1.49 & \underline{78.08} & \underline{80.22} & 42.88 & 62.45 \\
      & MM & 82.02 & 50.52 & 95.70 & 41.41 & 4.08 & 66.88 & 43.69 & 28.18 & 47.38 & 85.58 & 98.86 & 13.02 & 1.04 & 63.12 & 30.83 \\
      & Lockdown & 84.05 & 6.68 & 8.01 & \underline{75.23} & \underline{75.73} & 7.11 & 6.03 & \underline{76.63} & \underline{75.77} & 1.24 & 2.19 & 73.82 & 73.81 & 5.21 & \underline{75.07} \\
      & \cellcolor[rgb]{ .816, .808, .808}AlignIns & \cellcolor[rgb]{ .816, .808, .808}83.77 & \cellcolor[rgb]{ .816, .808, .808}\textbf{2.48} & \cellcolor[rgb]{ .816, .808, .808}\underline{1.7} & \cellcolor[rgb]{ .816, .808, .808}\textbf{81.17} & \cellcolor[rgb]{ .816, .808, .808}\textbf{81.32} & \cellcolor[rgb]{ .816, .808, .808}\textbf{1.54} & \cellcolor[rgb]{ .816, .808, .808}\textbf{1.10} & \cellcolor[rgb]{ .816, .808, .808}\textbf{81.24} & \cellcolor[rgb]{ .816, .808, .808}\textbf{81.11} & \cellcolor[rgb]{ .816, .808, .808}2.73 & \cellcolor[rgb]{ .816, .808, .808}2.08 & \cellcolor[rgb]{ .816, .808, .808}\textbf{81.54} & \cellcolor[rgb]{ .816, .808, .808}\textbf{80.42} & \cellcolor[rgb]{ .816, .808, .808}\textbf{1.77} & \cellcolor[rgb]{ .816, .808, .808}\textbf{80.48} \\
      \midrule
      \multicolumn{1}{c@{}}{} & \multicolumn{1}{c}{} & \multicolumn{1}{c}{} & & & & \multicolumn{1}{c}{} & & & & \multicolumn{1}{c}{} & & &  & \multicolumn{1}{c}{} & \vspace{-14pt} \\
      \midrule
      \multirow{9}[4]{*}{\rotatebox{90}{\makecell*[c]{CIFAR-100 \\ (VGG9)}}} 
      & FedAvg & 63.33 & 99.57 & 99.63 & 0.35 & 0.33 & 99.52 & 99.74 & 0.45 & 0.23 & 97.58 & 97.18 & 1.94 & 2.25 & 98.66 & 0.92 \\
      & FedAvg* & 63.33 & 0.59 & 0.59 & 50.21 & 50.21 & 0.59 & 0.59 & 50.21 & 50.21 & 0.59 & 0.59 & 50.21 & 50.21 & 0.59 & 50.21 \\
      \cmidrule{2-17}
      & RLR & 35.83 & 58.31 & 98.94 & 9.22 & 0.47 & 2.31 & 76.82 & 22.61 & 7.79 & \textbf{0.00} & 15.54 & 11.31 & 15.54 & 42.26 & 11.66 \\
      & RFA & 34.16 & \underline{3.19} & \underline{0.89} & 25.07 & 26.58 & \underline{0.91} & 4.25 & 24.68 & 25.66 & 99.47 & 8.52 & 0.36 & 22.82 & 22.51 & 20.93 \\
      & MKrum & 45.10 & 99.44 & 1.84 & 0.43 & \underline{34.89} & 99.30 & \underline{1.22} & 0.55 & \underline{34.05} & 99.71 & 99.20 & 0.23 & 0.49 & 54.69 & 14.69 \\
      & Foolsgold & 62.77 & 99.58 & 99.56 & 0.38 & 0.38 & 99.52 & 99.67 & 0.43 & 0.29 & 11.64 & 11.06 & 43.01 & 42.20 & 70.23 & 10.27 \\
      & MM & 60.22 & 99.65 & 99.93 & 0.28 & 0.04 & 99.90 & 99.94 & 0.10 & 0.06 & 99.73 & 99.82 & 0.23 & 0.14 & 99.53 & 0.18 \\
      & Lockdown & 60.91 & 29.19 & 40.08 & \underline{32.91} & 30.60 & 11.90 & 20.08 & \underline{34.97} & 32.79 & \underline{0.13} & \textbf{0.07} & \underline{44.42} & \underline{42.72} & \underline{21.73} & \underline{36.47} \\
      & \cellcolor[rgb]{ .816, .808, .808}AlignIns & \cellcolor[rgb]{ .816, .808, .808}59.18 & \cellcolor[rgb]{ .816, .808, .808}\textbf{0.66} & \cellcolor[rgb]{ .816, .808, .808}\textbf{0.54} & \cellcolor[rgb]{ .816, .808, .808}\textbf{47.51} & \cellcolor[rgb]{ .816, .808, .808}\textbf{44.67} & \cellcolor[rgb]{ .816, .808, .808}\textbf{0.19} & \cellcolor[rgb]{ .816, .808, .808}\textbf{0.42} & \cellcolor[rgb]{ .816, .808, .808}\textbf{47.33} & \cellcolor[rgb]{ .816, .808, .808}\textbf{48.77} & \cellcolor[rgb]{ .816, .808, .808}1.20 & \cellcolor[rgb]{ .816, .808, .808}\underline{1.09} & \cellcolor[rgb]{ .816, .808, .808}\textbf{49.17} & \cellcolor[rgb]{ .816, .808, .808}\textbf{45.70} & \cellcolor[rgb]{ .816, .808, .808}\textbf{0.64} & \cellcolor[rgb]{ .816, .808, .808}\textbf{47.86} \\
      \bottomrule
    \end{tabular}
  }
  \vspace{-10pt}
\end{table*}

% Table generated by Excel2LaTeX from sheet 'Sheet1'

In non-IID settings, the divergence between benign model updates will increase, thus defense methods are hard to identify malicious model updates. From \autoref{tab:main_results_extra}, We can conclude MM still fails to detect malicious model updates on two non-IID cases. Foolsgold can only exhibit a limited degree of robustness under Neurotoxin attack. Specifically, in the non-IID CIFAR-10 under DBA attack, Foolsgold was unable to effectively detect malicious model updates. This resulted in a BA of $42.20\%$ and $63.56\%$ and an RA of $50.79\%$ and $31.62\%$. The reason for this lies in the feature of the Neurotoxin attack, where the malicious model updates are projected to the Bottom-$k$ parameters of the aggregated model update in the latest round. This process makes the malicious model updates generated by Neurotoxin attacks have the same Top parameters, reducing local variance between them. Foolsgold enjoys a more accurate identification of malicious model updates as it works based on the assumption that malicious model updates are consistent with each other. In contrast, AlignIns exhibits outstanding robustness in the same case as AlignIns achieves significantly superior performance, yielding the lowest BA at $1.54\%$ and $1.10\%$, and the highest RA at $81.24\%$ and $81.11\%$. This marks an improvement of $+40.66\%$ and $+62.46\%$ in BA and $+30.45\%$ and $+49.49\%$ in RA over Foolsgold. For CIFAR-100 dataset, AlignIns still have a lower BA and higher RA than their counterparts, underlining the enhanced detection and robustness capabilities of AlignIns in challenging non-IID conditions.

\begin{table}[ht]
  \centering
  \caption{Performance of AlignIns on Tiny-ImageNet dataset.}
  \scalebox{0.8}{
    \begin{tabular}{c|cc|cc|cc}
    \toprule
    \multirow{2}[4]{*}{\textbf{Method}} & \multicolumn{2}{c}{\textbf{Badnet}} & \multicolumn{2}{c}{\textbf{Neurotoxin}} & \multirow{2}[3]{*}{\textbf{\makecell*[c]{Avg. \\ BA$\downarrow$}}} & \multirow{2}[3]{*}{\textbf{\makecell*[c]{Avg. \\ RA$\uparrow$}}}\\
\cmidrule(r){2-3} \cmidrule(r){4-5}             & BA$\downarrow$    & RA$\uparrow$      & BA$\downarrow$    & RA$\uparrow$   \\
    \midrule
    RLR  & 55.54 & 18.25 & 0.54 & 22.01 & 28.04 & 20.13 \\
    RFA  & 0.38 & 32.40 & 97.41 & 1.97 & 48.90 & 17.19 \\
    MKrum  & \underline{0.36} & \underline{32.60} & 29.37 & 25.55 & \underline{14.87} & \underline{29.08} \\
    Foolsgold  & 93.59 & 4.68 & \textbf{0.26} & \textbf{37.05} & 46.93 & 20.87 \\
    MM  & 97.01 & 2.11 & 90.85 & 5.27 & 93.93 & 3.69 \\
    Lockdown & 72.08 & 17.09 & \underline{0.34} & 28.18 & 36.21 & 22.64 \\
    \rowcolor[rgb]{ .816,  .808,  .808} AlignIns & \textbf{0.22} & \textbf{34.55} & 0.40 & \underline{36.30} & \textbf{0.31} & \textbf{35.43} \\
    \bottomrule
    \end{tabular}%
    }
  \label{tab:tiny-imagenet}%
  % \vspace{-5pt}
\end{table}%

\subsection{Results on Larger Datasets}

We also evaluate AlignIns on the Tiny-ImageNet dataset, which is typically the largest dataset considered in related works. The BA and RA results are summarized in \autoref{tab:tiny-imagenet}. AlignIns demonstrates strong robustness against both BadNet and Neurotoxin attacks, achieving the lowest BA ($0.31\%$) and the highest RA ($35.43\%$). These results highlight the practical effectiveness of AlignIns on large, real-world datasets.

\subsection{Trigger-Optimization Attack}
\label{apdx: triggeroptimization}
We evaluate the experimental performance of AlignIns under the strong trigger-optimization attack. Specifically, we consider the SOTA trigger-optimization attack F3BA~\cite{f3ba} and conduct experiments on CIFAR-10 dataset under both IID and varying degrees of non-IID settings. As the results shown in \autoref{tab:f3ba}, 
FedAvg is vulnerable to F3BA as it has a high BA and low RA. Similarly, RLR also cannot provide enough robustness to F3BA especially when the data heterogeneity is high. In contrast, AlignIns consistently achieves the highest robustness across all scenarios. Specifically, compared to Bulyan, AlignIns yields an average increase of $+22.63\%$ in BA and $+19.11\%$ in RA. While trigger-optimization attacks typically search for an optimal trigger to enhance their stealthiness and effectiveness, AlignIns can still identify malicious and benign model updates by inspecting their alignments.

\begin{table}[ht]
  \centering
  \caption{Performance of AlignIns under trigger-optimization attack on CIFAR-10 dataset in both IID and non-IID settings.}
  \scalebox{0.7}{
    \begin{tabular}{c|cc|cc|cc|cc}
    \toprule
    \multirow{3}[6]{*}{\textbf{Method}} & \multicolumn{8}{c}{\textbf{Data Distritbuion}} \\
\cmidrule{2-9}          & \multicolumn{2}{c}{$\beta$=0.3} & \multicolumn{2}{c}{$\beta$=0.5} & \multicolumn{2}{c}{$\beta$=0.7} & \multicolumn{2}{c}{IID} \\
\cmidrule(r){2-3}\cmidrule(r){4-5}\cmidrule(r){6-7}\cmidrule(r){8-9}          & BA$\downarrow$    &  RA$\uparrow$   & BA$\downarrow$    & RA$\uparrow$    & BA$\downarrow$    & RA$\uparrow$    & BA$\downarrow$    & RA$\uparrow$ \\
    \midrule
    FedAvg & 93.97 &  5.13 & 93.44 & 6.06 & 94.76 & 4.83  & 94.16 & 5.50 \\
    RLR   & 92.58 &  6.71 & 93.20 &  6.42 & 81.38 &  15.80 & 86.23 & 13.23 \\
    Bulyan & \underline{60.97} & \underline{27.49}  &  \underline{8.57} & \underline{58.12} & \underline{17.82} & \underline{57.71} & \underline{15.61} & \underline{64.40} \\
    \rowcolor[rgb]{ .816,  .808,  .808}AlignIns & \textbf{5.22} &  \textbf{65.12} & \textbf{2.33} & \textbf{72.82}  & \textbf{1.99} &  \textbf{70.50} & \textbf{2.91} & \textbf{75.71} \\
    \bottomrule
    \end{tabular}%
    }
  \label{tab:f3ba}%
  % \vspace{-5pt}
\end{table}%

\subsection{Effectiveness under Adaptive Attack}
\label{apdx: adaptive_attack}
Recall that in our attack model, the attacker is assumed to be unaware of the defense method the server deployed. Here, we assume the attacker has such knowledge and evaluate AlignIns under attacks tailored to circumvent it. Specifically, we design two adaptive attacks: ADA\_A, where each malicious client randomly selects a benign model update and mirrors its sign, and ADA\_B, where each malicious client aligns with the principal sign of all model updates. Results are summarized in \autoref{tab:adaptive_attack}. In the results, AlignIns shows strong resistance to both ADA\_A and ADA\_B attacks. For ADA\_A, although it leverages benign signs, $\mathrm{MPSA}$ focuses on the signs of important weights, which typically differ from those of benign models, allowing AlignIns to counter ADA\_A effectively. For ADA\_B, using the principal sign yields an $\mathrm{MPSA}$ value of $1.0$, which our $\mathrm{MZ_Score}$ can readily detect. These results confirm that AlignIns effectively limits backdoor success and preserves the main task and robust accuracy, even against adaptive attack strategies tailored to exploit its defenses.

\begin{table}[ht]
  \centering
  \caption{Performance of AlignIns on Adaptive Attacks.}
  \scalebox{0.75}{
    \begin{tabular}{c|ccc|ccc}
    \toprule
    \multirow{2}[4]{*}{\textbf{Dataset}} & \multicolumn{3}{c}{\textbf{ADA\_A}} & \multicolumn{3}{c}{\textbf{ADA\_B}} \\
    \cmidrule(r){2-4} \cmidrule(r){5-7} 
         & MA$\uparrow$    & BA$\downarrow$    & RA$\uparrow$    & MA$\uparrow$    & BA$\downarrow$    & RA$\uparrow$    \\
    \midrule
    CIFAR-10 & 88.22 & 2.34 &   85.44& 88.33 & 1.82 &  86.49  \\
    CIFAR-100 & 62.10 & 0.48 & 51.87  & 62.86 & 0.37 &  53.55  \\
    \bottomrule
    \end{tabular}%
    }
  \label{tab:adaptive_attack}%
  % \vspace{-5pt}
\end{table}%

\subsection{Effectiveness under Untargeted Attack}
\label{apdx: untargeted_attack}
\begin{table}[t]
  \centering
  \caption{The MA of AlignIns under untargeted attack on CIFAR-10 dataset in both IID and non-IID settings.}
  \scalebox{0.67}{
    \begin{tabular}{c|cccc|cccc}
    \toprule
    \multirow{2}[4]{*}{\textbf{Method}} & \multicolumn{4}{c}{\textbf{Attack Ratio=}10\%}   & \multicolumn{4}{c}{\textbf{Attack Ratio=}20\%} \\
\cmidrule(r){2-5} \cmidrule(r){6-9}          & $\beta$=0.3   & $\beta$=0.5   & $\beta$=0.7   & IID   & $\beta$=0.3   & $\beta$=0.5   & $\beta$=0.7   & IID \\
    \midrule
    FedAvg &   10.95    &    13.21   &  11.66     &   20.71    & 10.85 & 12.96 & 10.33 & 18.62 \\
    RFA   & 77.43 & 78.26 & 80.45 & 87.03 & 77.02 & 76.93 & 79.76 & 86.03 \\
    MKrum & 67.99 & 71.14 & 76.76 & 86.87 & 65.61 & 74.39 & 77.16 & 86.39 \\
    SignGuard & \underline{85.11} & \underline{85.58} & \underline{86.84} & \textbf{89.23} & \textbf{85.71} & \underline{84.69} & \textbf{86.22} & \underline{88.45} \\
    \rowcolor[rgb]{ .816,  .808,  .808}AlignIns &   \textbf{85.32}    &   \textbf{85.61}    &  \textbf{87.13}     &   \textbf{89.23} & \underline{85.49} & \textbf{84.98} & \underline{86.18} & \textbf{88.54} \\
    \bottomrule
    \end{tabular}%
    }
  \label{tab:byzmean}%
  
\end{table}%

In this section, we conduct experiments to illustrate how AlignIns performs with respect to untargeted attacks (also known as Byzantine attacks). Byzantine attacks aim to degrade the model's overall performance during the training as much as possible. We consider the SOTA Byzantine attack method ByzMean~\cite{signguard} which uses the Lie attack~\cite{lie} as the backbone of the attack baseline. We also involve the SOTA Byzantine-robust method SignGuard~\cite{signguard} in our experiments. \autoref{tab:byzmean} reports the MA of FedAvg, RFA, MKrum, SignGuard, and our method AlignIns, in defending against ByzMean attack on CIFAR-10 dataset with attack ratios of $10\%$ and $20\%$ under different data settings. 
The results indicate that non-robust baseline FedAvg collapsed when facing to ByzMean attack in all cases, yielding an accuracy below $20\%$. RFA and MKrum provide a certain but limited Byzantine-robustness. In contrast, AlignIns consistently achieves comparable accuracy with SOTA SignGuard across all scenarios. These results demonstrate AlignIns' generalization ability for both backdoor and Byzantine attacks, making it a potential and potent method for practical application in real-world scenarios where there is no prior knowledge about the attack type.

\subsection{Effectiveness on More Datasets}
\label{apdx: more_dataset}
To validate that the achieved robustness by AlignIns can be generalized to other datasets, we show our evaluation results on MNIST, FMNIST, and Sentiment140 under Badnet attack in \autoref{tab:vary_dataset}. We also involve the perfectly robust FedAvg* for comparison. Notably, AlignIns consistently aligns with FedAvg* in MA, BA, and RA, indicating AlignIns can accurately identify malicious model updates and preserve benign model updates at the same time to attain such a high robustness and model performance. Additionally, AlignIns shows SOTA defense efficacy compared to other counterparts. For example, AlignIns maintains the highest BA at $0.36\%$, $0.01\%$, and $41.43\%$, with an improvement of $+21.42\%$, $+0.03\%$, and $+57.62\%$ over RLR on the respective three datasets. Besides, AlignIns also achieves the highest RA across all datasets, averaging a $+22.21\%$ increase compared to RFA. These findings verify the robustness and stability of AlignIns across various datasets. 
\begin{table}[ht]
  \centering
  \caption{Performance of AlignIns on More Datasets.}
  \scalebox{0.65}{
    \begin{tabular}{c|ccc|ccc|ccc}
    \toprule
    \multirow{2}[4]{*}{\textbf{Method}} & \multicolumn{3}{c}{\textbf{MNIST}} & \multicolumn{3}{c}{\textbf{FMNIST}} & \multicolumn{3}{c}{\textbf{Sentiment140}} \\
\cmidrule(r){2-4} \cmidrule(r){5-7} \cmidrule(r){8-10}          & MA$\uparrow$    & BA$\downarrow$    & RA$\uparrow$    & MA$\uparrow$    & BA$\downarrow$    & RA$\uparrow$    & MA$\uparrow$    & BA$\downarrow$    & RA$\uparrow$ \\
    \midrule
    FedAvg & 97.66 & 99.87 & 0.13  & 88.34 & 98.40 & 1.46  & 66.16 & 85.55 & 14.45 \\
    FedAvg* &   97.63    &   0.37    &  97.60     &   88.44    &   0.60    &   76.72    & 67.31 & 41.57 & 58.43 \\
    \midrule
    RLR   & 96.48 & 21.78 & 75.39 & 86.51 & \underline{0.04}  & \underline{75.44} & 51.23 & \underline{99.05} & \underline{0.95} \\
    RFA   & \underline{97.72} & \underline{0.61}  & \underline{97.53} & \textbf{88.53} & 13.08 & 69.09 & \underline{60.71} & 99.90 & 0.10 \\
    \rowcolor[rgb]{ .816,  .808,  .808} AlignIns & \textbf{97.76} & \textbf{0.36}  & \textbf{97.73} & \underline{88.50}  & \textbf{0.01}  & \textbf{77.04} & \textbf{69.26} & \textbf{41.43} & \textbf{58.57} \\
    \bottomrule
    \end{tabular}%
    }
  \label{tab:vary_dataset}%
  % \vspace{-5pt}
\end{table}%

\subsection{Effectiveness under Various Attack Ratios.}
\label{apdx: attack_ratio}
We further evaluate the performance of AlignIns under various attack ratios in non-IID settings. We conduct the experiments under PGD and Scaling attacks with the attack ratio varying from $5\%$ to $30\%$ on non-IID CIFAR-10 and CIFAR-100 datasets. 
As shown in \autoref{fig:attackraio}, the RA of RLR and MKrum generally decreases as the attack ratio increases. For instance, when the attack ratio exceeds $20\%$, MKrum loses effectiveness, with RA dropping to as low as $0.02\%$. This decline is primarily due to the PGD attack, which projects malicious model updates within a sphere centered around the global model, limiting magnitude changes and evading detection by magnitude-based methods like MKrum. Lockdown achieves comparable robustness with AlignIns at low attack ratios on the CIFAR-10 dataset. Yet, it fails to effectively protect against both types of attacks when the attack ratios are high ($30\%$), resulting in considerable declines in robustness. Compared to its counterparts, AlignIns achieves a higher and more stable performance. As the attack ratio increases, AlignIns only has a minor decrease in RA.

\begin{figure}[ht]
    \centering
\includegraphics[width=0.90\linewidth]{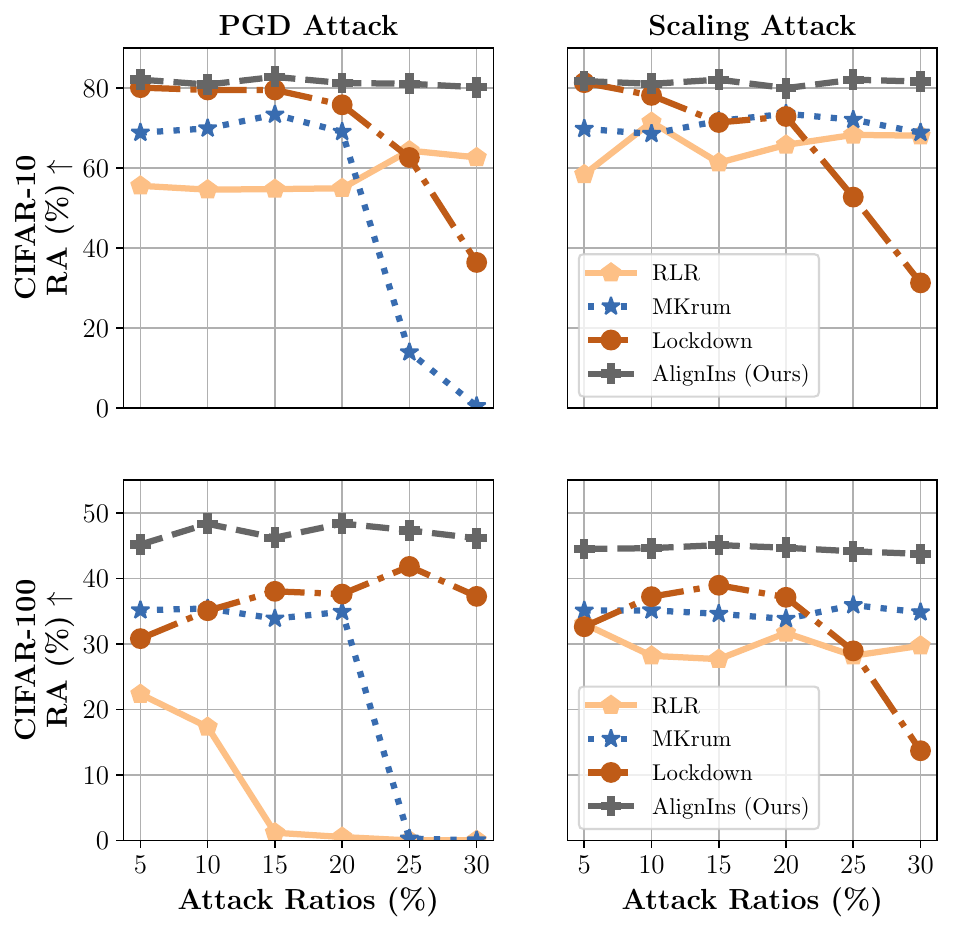}
    \vspace{-5pt}
    \caption{RA of AlignIns under various attack ratios on CIFAR-10 (upper row) and CIFAR-100 (lower row) datasets, compared with Lockdown, MKrum, and RLR.}
    \label{fig:attackraio}
    \vspace{-5pt}
\end{figure}

\section{Impact of Filtering Radii} \label{apdx:impact_of_filter_radii}

Here, we dive into the impact of different configurations of filtering radii, $\lambda_s$ and $\lambda_c$, on the efficacy of AlignIns. A smaller $\lambda_s$ or $\lambda_c$ indicates more stringent filtering and results in a smaller benign set for aggregation. We conduct the experiments on non-IID CIFAR-10 and CIFAR-100 datasets under Badnet and PGD attacks. The results, as detailed in \autoref{tab:filter}, show the ideal configurations of $\lambda_s$ and $\lambda_c$ that effectively balance the filtering intensity while maximizing the robustness of the model. Specifically, for CIFAR-10 dataset, the optimal RA is attained when $\lambda_s$ and $\lambda_c$ are both set to $1.0$ under both Badnet and PGD attacks, suggesting an ideal level of filtering intensity. A reduction in either $\lambda_s$ or $\lambda_c$ leads to a slight drop in RA, implying that some benign updates may be erroneously discarded due to an overly stringent filtering radius. In contrast, when $\lambda_s$ and $\lambda_c$ are increased to $2.0$, there's a significant decline in AlignIns' RA, due to the excessively permissive filtering threshold. As for CIFAR-100 dataset, AlignIns' performance remains stable against variations in both radii. Specifically, under the Badnet attack, AlignIns performs best when both radii are at $2.0$, while for the PGD attack, the radii at $1.0$ are most effective. This is mainly because PGD attack limits the large malicious model update changes, conducting a more stealthy attack than Badnet. By doing so, it makes the malicious model updates more similar to benign ones, leading to a smaller filter radius. 

\begin{table}[ht]
  \centering
  \caption{Performance of AlignIns with Different Filtering Radii.}
  \scalebox{0.7}{
    \begin{tabular}{cc|cc|cc|cc|cc}
    \toprule
    \multicolumn{2}{c}{\multirow{2}[4]{*}{\textbf{Config.}}} & \multicolumn{4}{c}{\textbf{CIFAR-10}}  & \multicolumn{4}{c}{\textbf{CIFAR-100}} \\
\cmidrule(r){3-6} \cmidrule(r){7-10}    \multicolumn{2}{c}{} & \multicolumn{2}{c}{Badnet} & \multicolumn{2}{c}{PGD} & \multicolumn{2}{c}{Badnet} & \multicolumn{2}{c}{PGD} \\
    \cmidrule(r){1-2} \cmidrule(r){3-4} \cmidrule(r){5-6} \cmidrule(r){7-8} \cmidrule(r){9-10}
    $\lambda_s$ & $\lambda_c$ & BA$\downarrow$    & RA$\uparrow$    & BA$\downarrow$    & RA$\uparrow$    & BA$\downarrow$    & RA$\uparrow$    & BA$\downarrow$    & RA$\uparrow$ \\
    \midrule
    0.5   & 0.5   & \textbf{0.58} & 76.37 & 3.29  & \underline{79.39} & 0.59  & 43.22 & 0.59  & 46.17 \\
    1.0   & 0.5   & 4.71  & 78.27 & 63.60 & 32.27 & \textbf{0.49} & 44.41 & 0.62  & 46.83 \\
    0.5   & 1.0   & 3.11  & \underline{78.99} & \textbf{1.73} & 79.37 & 0.58  & 43.18 & \textbf{0.19} & 44.67 \\
    1.0   & 1.0   & \underline{1.70}  & \textbf{81.32} & \underline{2.31}  & \textbf{81.18} & \underline{0.54}  & \underline{44.67} & \underline{0.52}  & \textbf{48.37} \\
    2.0   & 2.0   & 57.47 & 37.53 & 81.33 & 17.69 & 0.76  & \textbf{47.07} & 0.68  & \underline{46.99} \\
    \bottomrule
    \end{tabular}%
    }
% \vspace{-5pt} 
  \label{tab:filter}%
\end{table}%

\section{Computational Cost of AlignIns}
\label{apdx: computational_cost}
We compare the computational cost of AlignIns with other counterparts. AlignIns calculates the $\mathrm{MPSA}$ metric using the Top-$k$ indicator, incurring a complexity of $O(d\log d)$ due to the use of sorting algorithms like \textit{merge sort} in the parameter space of the local update. 
% median and std O(n), cos O(d)
As a result, the total computational expense of AlignIns in the worst-case scenario is $O(nd\log d)$. Nonetheless, we argue that the computational burden of AlignIns is comparable with several robust aggregation methods such as Krum and MKrum, both of which have a complexity of $O(dn^2)$, the Coordinate-wise median with $O(dn)$, and Trmean at $O(dn\log n)$. Each method shows a linear dependency on $d$, which can be considerably large in modern deep neural networks (i.e., $d \gg n$), and thus is the predominant factor in computational complexity. Empirically, AlignIns imposes minimal computational overhead on the server side ($0.13$ seconds per round), compared to $4.02$ seconds for another filtering-based method MM. Other methods like Lockdown introduce additional computational overhead on local clients, which is undesirable in many scenarios.

\clearpage
\onecolumn

\section{Proof preliminaries}
\subsection{Useful Inequalities}

\begin{lemma}Given any two vectors $a, b \in \mathbb{R}^d$,
\label{Lemma_for_product}
\begin{align*}
    2\left < a ,b \right > \leq \alpha \left \| a \right \|^2 + \frac{1}{\alpha} \left \| b \right \|^2, \forall \; \alpha > 0.
\end{align*} 
\end{lemma}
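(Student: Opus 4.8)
\textbf{Proof plan for Lemma~\ref{Lemma_for_product}.}

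The plan is to reduce the inequality to the obvious fact that a squared norm is nonnegative. First I would pick the right vector to square: consider $\bigl\|\sqrt{\alpha}\,a - \tfrac{1}{\sqrt{\alpha}}\,b\bigr\|^2$, which is well-defined and nonnegative for every $\alpha>0$. Expanding this squared norm via the bilinearity of the inner product gives
\begin{align*}
    0 \leq \Bigl\|\sqrt{\alpha}\,a - \tfrac{1}{\sqrt{\alpha}}\,b\Bigr\|^2 = \alpha\|a\|^2 - 2\langle a,b\rangle + \tfrac{1}{\alpha}\|b\|^2.
\end{align*}
Rearranging this single inequality immediately yields $2\langle a,b\rangle \leq \alpha\|a\|^2 + \tfrac{1}{\alpha}\|b\|^2$, which is exactly the claim.

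Since the statement is universally quantified over $\alpha>0$, I would make explicit that $\alpha$ is treated as an arbitrary but fixed positive scalar throughout, so the conclusion holds for all such $\alpha$. No additional assumptions on $a$ or $b$ are needed (the case $a=0$ or $b=0$ is covered automatically, and the degenerate case is not special).

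There is essentially no main obstacle here: the only thing to be careful about is the algebra of the expansion — specifically that the cross term is $-2\langle a,b\rangle$ (using symmetry of the real inner product) and that $\sqrt{\alpha}$ and $1/\sqrt{\alpha}$ are real and well-defined precisely because $\alpha>0$. An equivalent route, if one prefers to avoid square roots, is to start from $0\le\|\alpha a - b\|^2$ and divide through by $\alpha$ at the end; I would mention this only as an alternative and keep the square-root completion as the primary argument since it is the cleanest. This lemma is the standard weighted Young's inequality for inner products and will be invoked repeatedly in the later error-propagation estimates.
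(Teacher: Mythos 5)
Your proof is correct: expanding the nonnegative quantity $\bigl\|\sqrt{\alpha}\,a - \tfrac{1}{\sqrt{\alpha}}\,b\bigr\|^2$ and rearranging is exactly the standard argument for this weighted Young's inequality, and the paper itself states the lemma without proof as a standard fact. No gaps; your handling of the quantifier over $\alpha>0$ and the degenerate cases is fine.
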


\begin{lemma}Given any two vectors $a, b \in \mathbb{R}^d$,
\label{Lemma_for_break_norm}
\begin{align*}
    \left\| a + b \right \|^2 \leq (1+\delta)\left\| a \right \|^2 + (1 + \delta ^{-1})\left\| b \right \|^2, \forall \; \delta > 0.
\end{align*} 
\end{lemma}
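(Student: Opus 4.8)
The plan is to reduce \autoref{Lemma_for_break_norm} to the already-stated \autoref{Lemma_for_product} together with the standard expansion of the squared Euclidean norm. The key observation is that $\|a+b\|^2$ expands exactly as $\|a\|^2 + 2\langle a,b\rangle + \|b\|^2$ by bilinearity of the inner product, so the only quantity that needs bounding is the cross term $2\langle a,b\rangle$, and that bound is precisely what \autoref{Lemma_for_product} supplies.

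Concretely, I would first write $\|a+b\|^2 = \langle a+b,\,a+b\rangle = \|a\|^2 + 2\langle a,b\rangle + \|b\|^2$. Then I would invoke \autoref{Lemma_for_product} with its free parameter $\alpha$ chosen equal to $\delta$, which gives $2\langle a,b\rangle \leq \delta\|a\|^2 + \delta^{-1}\|b\|^2$ for every $\delta>0$. Substituting this into the expansion and collecting the coefficients of $\|a\|^2$ and of $\|b\|^2$ yields $\|a+b\|^2 \leq \|a\|^2 + \delta\|a\|^2 + \delta^{-1}\|b\|^2 + \|b\|^2 = (1+\delta)\|a\|^2 + (1+\delta^{-1})\|b\|^2$, which is exactly the asserted inequality.

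There is essentially no obstacle here: the statement is a one-line corollary of \autoref{Lemma_for_product}, and the only ``choice'' in the argument is setting $\alpha=\delta$. If one prefers a fully self-contained derivation that does not cite \autoref{Lemma_for_product}, the needed cross-term bound follows directly from the non-negativity of a squared norm, namely from expanding $\bigl\|\sqrt{\delta}\,a - \tfrac{1}{\sqrt{\delta}}\,b\bigr\|^2 \geq 0$, which is the complete-the-square (equivalently, weighted AM--GM) identity that also underlies \autoref{Lemma_for_product} itself. Either route requires no case analysis and no additional hypotheses beyond $\delta>0$.
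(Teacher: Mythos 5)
Your proof is correct: expanding $\|a+b\|^2 = \|a\|^2 + 2\langle a,b\rangle + \|b\|^2$ and bounding the cross term via \autoref{Lemma_for_product} with $\alpha=\delta$ immediately gives the claim. The paper states this lemma without proof as a standard inequality, and your derivation is exactly the canonical one it implicitly relies on, so there is nothing to add.
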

\begin{lemma} \label{Lemma_for_extract_sum}Given arbitrary set of n vectors $\{a_i\}^n_{i=1}$, $a_i \in \mathbb{R}^d$,
\begin{align*}
    \left\|  \sum^n_{i=1} a_i \right \|^2 \leq n\sum^n_{i=1}\left \| a_i\right \|^2.
\end{align*}
\end{lemma}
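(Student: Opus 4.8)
The plan is to prove the inequality by a direct expansion of the squared norm into a double sum of inner products and then bound each cross term by the elementary estimate $\langle a_i, a_j\rangle \le \tfrac12\|a_i\|^2 + \tfrac12\|a_j\|^2$. This keeps the argument fully self-contained; it is worth noting that the statement is nothing more than Jensen's inequality for the convex map $x \mapsto \|x\|^2$ (applied to the average $\tfrac1n\sum_i a_i$ and rescaled by $n^2$), and that it also follows in one line from the Cauchy--Schwarz inequality applied to the vectors $(1,\dots,1)$ and $(\|a_1\|,\dots,\|a_n\|)$ in $\mathbb{R}^n$ after first using the triangle inequality $\bigl\|\sum_i a_i\bigr\| \le \sum_i \|a_i\|$. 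I would, however, present the expansion argument since it reuses exactly the tools already collected in this section.

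First I would expand by bilinearity of the inner product:
\begin{align*}
\left\| \sum_{i=1}^n a_i \right\|^2 = \left\langle \sum_{i=1}^n a_i,\ \sum_{j=1}^n a_j \right\rangle = \sum_{i=1}^n \sum_{j=1}^n \langle a_i, a_j \rangle .
\end{align*}
Next, for every pair $(i,j)$ I would apply the bound $\langle a_i, a_j\rangle \le \tfrac12\|a_i\|^2 + \tfrac12\|a_j\|^2$, which is immediate from $0 \le \|a_i - a_j\|^2 = \|a_i\|^2 - 2\langle a_i, a_j\rangle + \|a_j\|^2$, and which is precisely the case $\alpha = 1$ of Lemma~\ref{Lemma_for_product} applied to $2\langle a_i, a_j\rangle$. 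Substituting into the double sum yields
\begin{align*}
\sum_{i=1}^n \sum_{j=1}^n \langle a_i, a_j \rangle \le \frac{1}{2} \sum_{i=1}^n \sum_{j=1}^n \left( \|a_i\|^2 + \|a_j\|^2 \right).
\end{align*}

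Finally I would evaluate the right-hand side by bookkeeping: in the double sum each term $\|a_i\|^2$ occurs once for each of the $n$ choices of $j$, and symmetrically for $\|a_j\|^2$, so the expression equals $\tfrac12\bigl(n\sum_{i=1}^n\|a_i\|^2 + n\sum_{j=1}^n\|a_j\|^2\bigr) = n\sum_{i=1}^n\|a_i\|^2$, which is the claim. There is essentially no obstacle here; the only point requiring any care is this final counting step, i.e., verifying that each squared norm is counted exactly $n$ times when the double sum is collapsed. As a sanity check, the same bound can be obtained by induction on $n$ from Lemma~\ref{Lemma_for_break_norm}: writing $\sum_{i=1}^{k+1} a_i = \bigl(\sum_{i=1}^{k} a_i\bigr) + a_{k+1}$ and choosing $\delta = 1/k$ makes the constants collapse to $k+1$, closing the induction.
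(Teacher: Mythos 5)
Your proof is correct and complete: the bilinear expansion, the cross-term bound $\langle a_i,a_j\rangle \le \tfrac12\|a_i\|^2+\tfrac12\|a_j\|^2$, and the final counting (each squared norm appearing exactly $n$ times) all check out, as does the alternative induction via Lemma~\ref{Lemma_for_break_norm} with $\delta=1/k$. There is nothing to compare against: the paper states this lemma as a standard preliminary fact and provides no proof of it, so any of the routes you sketch (expansion, Jensen applied to $x\mapsto\|x\|^2$, or Cauchy--Schwarz after the triangle inequality) would serve equally well.
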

\begin{lemma}\label{local_divergence}
    If the learning rate $\eta\leq 1/2\tau$, under \autoref{ass2} and \autoref{ass3}, the local divergence of benign model updates are bounded as follows:
    \begin{equation}
        \frac{1}{|\mathcal{B}|} \sum_{i \in \mathcal{B}} \mathbb{E}\left\|  {\Delta}_i - \bar{\Delta}_{\mathcal{B}} \right\|^2 \leq 2\bar{\nu} + \bar{\zeta} \nonumber
    \end{equation}
\end{lemma}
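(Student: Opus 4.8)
The plan is to bound the average squared deviation of benign model updates from their mean by unrolling the local SGD iterates, controlling the accumulated drift of each client's iterate from the common starting point, and then splitting the deviation into a variance term and a heterogeneity term. Recall that at round $t$ each benign client $i\in\mathcal{B}$ starts from the same global model $\theta^t$ and runs $\tau$ steps of local SGD with step size $\eta$, so $\Delta_i = \theta_i^{t,\tau}-\theta^t = -\eta\sum_{s=0}^{\tau-1} g_i(\theta_i^{t,s})$. The first step is to write $\Delta_i - \bar\Delta_{\mathcal{B}} = -\eta\sum_{s=0}^{\tau-1}\big(g_i(\theta_i^{t,s}) - \tfrac{1}{|\mathcal{B}|}\sum_{j\in\mathcal{B}} g_j(\theta_j^{t,s})\big)$ and take expectations; by \autoref{Lemma_for_extract_sum} the squared norm is at most $\eta^2\tau\sum_{s}\mathbb{E}\|g_i(\theta_i^{t,s}) - \overline{g}(\cdot)\|^2$, which reduces the problem to bounding the per-step gradient dispersion across benign clients.

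Next I would decompose each per-step term into a stochastic-noise part and a deterministic part. Writing $g_i(\theta_i^{t,s}) = \nabla\mathcal{L}_i(\theta_i^{t,s}) + \big(g_i(\theta_i^{t,s}) - \nabla\mathcal{L}_i(\theta_i^{t,s})\big)$, the noise part contributes at most $\bar\nu$ on average via \autoref{ass2}, and the deterministic part splits via \autoref{Lemma_for_break_norm} or Jensen into (a) the gradient heterogeneity at a common point, controlled by $\bar\zeta$ through \autoref{ass3}, and (b) the client drift $\|\nabla\mathcal{L}_i(\theta_i^{t,s}) - \nabla\mathcal{L}_i(\theta^t)\|$ plus the analogous quantity for the average, which by $\mu$-smoothness (\autoref{ass1}) is bounded by $\mu$ times the iterate drift $\|\theta_i^{t,s}-\theta^t\|$. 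The main work is then to show the accumulated drift $\sum_s\mathbb{E}\|\theta_i^{t,s}-\theta^t\|^2$ is itself controlled: one sets up the standard recursion $\mathbb{E}\|\theta_i^{t,s+1}-\theta^t\|^2 \le (1+\tfrac{1}{\tau-1})\mathbb{E}\|\theta_i^{t,s}-\theta^t\|^2 + \eta^2\tau\,\mathbb{E}\|g_i(\theta_i^{t,s})\|^2$-type inequality and bounds the gradient-norm term using smoothness and the previous pieces, closing the loop.

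The main obstacle I expect is precisely this self-referential drift bound: the dispersion term depends on the iterate drift, but the iterate drift in turn depends on gradient magnitudes which one wants to relate back to dispersion and heterogeneity. The condition $\eta\le 1/2\tau$ is what makes the recursion contractive enough to absorb the $(1+\mu^2\eta^2)$-type factors into constants, so that after summing the geometric series all the $\mu$-dependent and drift-dependent terms collapse and only $2\bar\nu + \bar\zeta$ survives (the factor $2$ on $\bar\nu$ coming from combining the own-noise term with the averaged-noise term, the latter being smaller by $1/|\mathcal{B}|$ but bounded crudely by $\bar\nu$). I would need to be careful with the exact constants so that the final bound is clean; a cleaner route, if available, is to invoke a known "bounded local progress" lemma from the distributed-optimization literature (e.g.\ along the lines of \cite{karimireddy2021byzantine}) directly under \autoref{ass1}--\ref{ass3} with $\eta\le 1/2\tau$, which yields the stated $2\bar\nu+\bar\zeta$ bound without re-deriving the recursion from scratch.
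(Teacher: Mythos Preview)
Your overall strategy (unroll the local SGD, pull out $\tau$ via \autoref{Lemma_for_extract_sum}, then bound the per-step gradient dispersion) matches the paper up to that point, but the paper's decomposition and the role of $\eta\le 1/(2\tau)$ are much simpler than you anticipate. The paper does \emph{not} introduce any drift-to-common-point argument, does not use smoothness (\autoref{ass1}) at all---note the lemma's hypothesis invokes only \autoref{ass2} and \autoref{ass3}---and does not set up a recursion. After reaching $\frac{\tau\eta^2}{|\mathcal{B}|}\sum_{i}\sum_{s}\mathbb{E}\bigl\|g_i^s-\tfrac{1}{|\mathcal{B}|}\sum_j g_j^s\bigr\|^2$, it splits each summand directly into three pieces: the own-noise $g_i^s-\nabla\mathcal{L}_i(\theta_i^{s})$, the averaged-noise $\nabla\mathcal{L}_{\mathcal{B}}(\theta_i^{s})-\tfrac{1}{|\mathcal{B}|}\sum_j g_j^s$, and the heterogeneity term $\nabla\mathcal{L}_i(\theta_i^{s})-\nabla\mathcal{L}_{\mathcal{B}}(\theta_i^{s})$. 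The first two are each bounded by $\bar\nu$ via \autoref{ass2}, and the third by $\bar\zeta$ via \autoref{ass3} applied \emph{at the local iterate} $\theta_i^{s}$ itself---no pullback to the common starting point $\theta^t$. This gives $3\tau^2\eta^2(2\bar\nu+\bar\zeta)$, and the step-size condition is used only to make the scalar prefactor $3\tau^2\eta^2\le 3/4<1$; there is no recursion to contract.

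Your route---pulling the heterogeneity back to $\theta^t$ and controlling the residual by $\mu$-smoothness plus a drift recursion---is arguably more careful (since \autoref{ass3} literally bounds an average over $i$ at a \emph{single} point $x$, whereas the paper applies it with $x=\theta_i^{s}$ varying with $i$), but it is not the paper's argument. It also imports \autoref{ass1}, which the lemma does not assume, and as you yourself flag, the drift recursion would bring in terms like $\|\nabla\mathcal{L}_i(\theta^t)\|^2$ that are not controlled by \autoref{ass2}--\ref{ass3} alone, so you would not recover the clean $2\bar\nu+\bar\zeta$ constant without additional hypotheses. The paper sidesteps all of this by bounding the heterogeneity term pointwise at each local iterate.
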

\begin{proof}
Given that $\Delta_i = \eta \sum^{\tau-1}_{s=0} g^{s}_i$ where $\eta$ is the learning rate and $g^{s}_i$ is the local stochastic gradient over the mini-batch $s$. We have
\begin{align}
    \nonumber \frac{1}{|\mathcal{B}|}\sum_{i \in \mathcal{B}} \mathbb{E}\left \|  \Delta_i - \bar{\Delta}_\mathcal{B} \right \|^2 
    &= \frac{1}{|\mathcal{B}|}\sum_{i \in \mathcal{B}} \mathbb{E}\left \|  \eta \sum^{\tau - 1}_{s=0} g^{s}_i - \frac{1}{|\mathcal{B}|}\sum_{i \in \mathcal{B}}\eta\sum^{\tau - 1}_{s=0} g^{s}_i \right \|^2 \\
    & =\frac{\eta^2}{|\mathcal{B}|}\sum_{i \in \mathcal{B}} \mathbb{E} \left \|  \sum^{\tau - 1}_{s=0} g^{s}_i - \frac{1}{|\mathcal{B}|}\sum_{i \in \mathcal{B}} \sum^{\tau - 1}_{s=0} g^{s}_i \right \|^2 \nonumber \\
    & \leq \frac{\tau \eta^2}{|\mathcal{B}|}\sum_{i \in \mathcal{B}}\sum^{\tau - 1}_{s=0} \mathbb{E}\left \|  g^{s}_i - \frac{1}{|\mathcal{B}|}\sum_{i \in \mathcal{B}}  g^{s}_i \right \|^2 \nonumber \\
    & = \frac{\tau \eta^2}{|\mathcal{B}|}\sum_{i \in \mathcal{B}}\sum^{\tau - 1}_{s=0} \mathbb{E}\left \|  \left( g^{s}_i - \nabla \mathcal{L}_i(\theta^{s}_i) \right)  + \left( \nabla \mathcal{L}_\mathcal{B}(\theta^{s}_i) - \frac{1}{|\mathcal{B}|}\sum_{i \in \mathcal{B}}  g^{s}_i \right) + \left( \nabla \mathcal{L}_i(\theta^{s}_i) -\nabla \mathcal{L}_\mathcal{B}(\theta^{s}_i) \right )  \right \|^2 \nonumber \\
    & \leq \frac{3 \tau \eta^2}{|\mathcal{B}|}\sum_{i \in \mathcal{B}}\sum^{\tau - 1}_{s=0} \underbrace{\mathbb{E} \left \|  g^{s}_i - \nabla \mathcal{L}_i(\theta^{s}_i) )\right \|^2}_{T_1}  + \frac{3 \tau \eta^2}{|\mathcal{B}|}\sum_{i \in \mathcal{B}}\sum^{\tau - 1}_{s=0}\underbrace{\mathbb{E} \left \| \nabla \mathcal{L}_\mathcal{B}(\theta^{s}_i) - \frac{1}{|\mathcal{B}|}\sum_{i \in \mathcal{B}}  g^{s}_i \right \|^2}_{T_2} \nonumber \\
    & \quad + \underbrace{\frac{3 \tau \eta^2}{|\mathcal{B}|}\sum_{i \in \mathcal{B}}\sum^{\tau - 1}_{s=0}\mathbb{E}\left \|  \nabla \mathcal{L}_i(\theta^{s}_i) -\nabla \mathcal{L}_\mathcal{B}(\theta^{s}_i)   \right \|^2}_{T_3}, \label{proof_our_kappa_t1}
\end{align}
where the first inequality follows \autoref{Lemma_for_extract_sum}, and the last second follows \autoref{Lemma_for_break_norm}. For $T_1$, with \autoref{ass2}, we have
\begin{align}
    T_1 \leq \bar{\nu}. \label{proof_our_kappa_t2}
\end{align}
For $T_2$, we have
\begin{align}
    T_2 = \mathbb{E}\left \| \nabla \mathcal{L}_\mathcal{B}(\theta_i^{ s}) - \frac{1}{|\mathcal{B}|}\sum_{i \in \mathcal{B}}  g^{ s}_i \right \|^2 = \mathbb{E}\left \| \frac{1}{|\mathcal{B}|}\sum_{i \in \mathcal{B}} \left(\nabla \mathcal{L}_i(\theta_i^{ s}) -  g^{ s}_i\right) \right \|^2  
    \leq \frac{1}{|\mathcal{B}|}\sum_{i \in \mathcal{B}} \mathbb{E}\left \| \nabla \mathcal{L}_i(\theta_i^{ s}) -  g^{s}_i \right \|^2 
    \leq  \bar{\nu},\label{proof_our_kappa_t3}
\end{align}
where the first inequality follows \autoref{Lemma_for_extract_sum}, and the last inequality follow \autoref{ass2}. For $T_3$, by \autoref{ass3}, we have
\begin{align}
    T_3 = \frac{3 \tau \eta^2}{|\mathcal{B}|}\sum_{i \in \mathcal{B}}\sum^{\tau - 1}_{s=0}\mathbb{E}\left \|  \nabla \mathcal{L}_i(\theta^{ s}_i) -\nabla \mathcal{L}_\mathcal{B}(\theta_i^{ s})   \right \|^2
    \leq 3 \tau \eta^2\sum^{\tau - 1}_{s=0} \bar{\zeta} = 3 \tau^2 \eta^2 \bar{\zeta}.\label{proof_our_kappa_t4}
\end{align}
\noindent Plugging \ineqautoref{proof_our_kappa_t2}, \ineqautoref{proof_our_kappa_t3}, and \ineqautoref{proof_our_kappa_t4} back to \ineqautoref{proof_our_kappa_t1}, with $\eta\leq 1/2\tau$, we have
% \begin{align*}
%     T_2 \leq 3 \tau^2 \eta^2 (2\bar{\nu} + \bar{\zeta}). \label{proof_our_kappa_finalt1}
% \end{align*}
% Assume $\eta\leq 1/2\tau$, we have
\begin{align}
    \frac{1}{|\mathcal{B}|}\sum_{i \in \mathcal{B}} \mathbb{E}\left \|  \Delta_i - \bar{\Delta}_\mathcal{B} \right \|^2 \leq 3 \tau^2 \eta^2 (2\bar{\nu} + \bar{\zeta})\leq 2\bar{\nu} + \bar{\zeta}.
\end{align}
This concludes the proof.
\end{proof}

% \clearpage
\subsection{Proof of \autoref{lemma:AlignInskappa}}
\label{apdx: proof_lemma_1}
\begin{proof}
Recall that our method is denoted by $F\colon \mathbb{R}^{d \times n} \rightarrow \mathbb{R}^d$. Given that $\Delta^t = F(\Delta^t_1, \Delta^t_2, \dots, \Delta^t_n) = 1/|\mathcal{S}^t|\sum_{i \in \mathcal{S}^t}\Delta^t_i$ where $\mathcal{S}^t$ is the selected set by $F$ in round $t$ and $m < n / 2$. Let $\Delta^t_\mathcal{B} = 1/|\mathcal{B}|\sum_{i \in \mathcal{B}}\Delta^t_i$ be the average of benign updates in round $t$, where $|\mathcal{B}| = n-m$. We have
\begin{align}
        \mathbb{E} \left  \| \Delta^t - \Delta^t_\mathcal{B} \right \|^2 &= \mathbb{E}\left \| \frac{1}{|\mathcal{S}^t|} \sum_{i \in \mathcal{S}^t} ( \Delta^t_i - \Delta^t_\mathcal{B} ) \right \|^2 \leq \mathbb{E}\frac{1}{|\mathcal{S}^t|} \sum_{i \in \mathcal{S}^t} \left \|  \Delta^t_i - \Delta^t_\mathcal{B} \right \|^2,
\end{align}
where the first inequality follows \autoref{Lemma_for_extract_sum}.

\noindent If $\mathcal{S}^t \subseteq \mathcal{B}$, thus $\mathcal{S}^t \backslash \mathcal{B} = \emptyset$ and $\mathcal{B} \backslash \mathcal{S}^t \subseteq \mathcal{B}$ we have
\begin{align}
    \mathbb{E} \left  \| \Delta^t - \Delta^t_\mathcal{B} \right \|^2 &\leq \mathbb{E}\frac{1}{|\mathcal{S}^t|} \sum_{i \in \mathcal{S}^t} \left \|  \Delta^t_i - \Delta^t_\mathcal{B} \right \|^2 \leq \mathbb{E}\frac{1}{|\mathcal{S}^t|} \sum_{i \in \mathcal{B}} \left \|  \Delta^t_i - \Delta^t_\mathcal{B} \right \|^2 \nonumber \\
    &\leq \frac{|\mathcal{B}|}{|\mathcal{S}^t|} \left (2\bar{\nu} + \bar{\zeta} \right) \nonumber \\
    &= \frac{n-m}{|\mathcal{S}^t|} \left (2\bar{\nu} + \bar{\zeta} \right), \label{kappa_in_1}
\end{align}
where the last inequality follows \autoref{local_divergence}.

\noindent If $\mathcal{S} \nsubseteq \mathcal{B}$, we let $\mathcal{S} \backslash \mathcal{B} = \mathcal{R}$, where $|\mathcal{R}| \leq m$, and $\mathcal{S} \cap \mathcal{B} = \mathcal{P}$, one yields 
\begin{align}
     \mathbb{E} \left  \| \Delta^t - \Delta^t_\mathcal{B} \right \|^2 &\leq \mathbb{E}\frac{1}{|\mathcal{S}^t|} \sum_{i \in \mathcal{S}^t} \left \|  \Delta^t_i - \Delta^t_\mathcal{B} \right \|^2 = \mathbb{E}\frac{1}{|\mathcal{S}^t|} \left[ \sum_{i \in \mathcal{P}} \left \|  \Delta^t_i - \Delta^t_\mathcal{B} \right \|^2 + \sum_{i \in \mathcal{R}} \left \|  \Delta^t_i - \Delta^t_\mathcal{B} \right \|^2 \right] \nonumber \\
        &= \mathbb{E}\frac{1}{|\mathcal{S}^t|} \left[ \sum_{i \in \mathcal{R}} \left \|  \Delta^t_i - \Delta^t_\mathcal{P} + \Delta^t_\mathcal{P} - \Delta^t_\mathcal{B} \right \|^2 + \sum_{i \in \mathcal{P}} \left \|  \Delta^t_i - \Delta^t_\mathcal{B} \right \|^2 \right]\nonumber \\
        &\leq \mathbb{E}\frac{1}{|\mathcal{S}^t|} \left[ 2\sum_{i \in \mathcal{R}} \left \|  \Delta^t_i - \Delta^t_\mathcal{P} \right \|^2 + 2\sum_{i \in \mathcal{R}}\left \| \Delta^t_\mathcal{P} - \Delta^t_\mathcal{B} \right \|^2 + \sum_{i \in \mathcal{P}} \left \|  \Delta^t_i - \Delta^t_\mathcal{B} \right \|^2 \right],
    \end{align}
    where the first inequality follows \autoref{Lemma_for_break_norm}. 
    
    \noindent Due to the use of MZ-score, models in $\mathcal{S}^t$ are centered around the median within a $\lambda_c$ (and $\lambda_s$) radius. If the radius parameter $\lambda_c$ or $\lambda_s$ equals zero, only the median model (based on Cosine similarity or masked principal sign alignment ratio) will be selected for averaging. To maximize benign model inclusion in averaging, we assume the radius parameters $\lambda_c$ and $\lambda_s$ are set sufficiently large to ensure $|\mathcal{S}^t|\geq n -2m$. More precisely, assume there exist two positive constants $\lambda^{+}_c$ and $\lambda^{+}_s$, and if the radius parameters $\lambda_c$ and $\lambda_s$ in \algautoref{alg:main} satisfy $\lambda_c \geq \lambda^{+}_c, \lambda_s \geq \lambda^{+}_s$, we have $|\mathcal{S}^t|\geq n -2m$. Additionally, if $m < n/(3 + \epsilon)$, we can have at least one benign clients in $\mathcal{S}^t$ and the ratio of $|\mathcal{R}| / |\mathcal{P}|$ is bounded by $1/\epsilon$. 
    % If $m < n/(3 + \epsilon)$, we have $n-2m > m$ which indicates there is at least one benign clients in $\mathcal{S}^t$. {Here, $\epsilon$ is the ratio of $|\mathcal{R}| / |\mathcal{P}|$.} 
    Consequently, we have
    \begin{align}
        \mathbb{E} \left  \| \Delta^t - \Delta^t_\mathcal{B} \right \|^2&\leq \mathbb{E}\frac{1}{|\mathcal{S}^t|} \left[ 2\sum_{i \in \mathcal{R}} \left [ \frac{1}{|\mathcal{P}|} \sum_{j \in \mathcal{P}} \left \|  \Delta^t_i - \Delta^t_j \right \|^2 \right] + \frac{2 |\mathcal{R} |}{|\mathcal{P}|} \sum_{i \in \mathcal{P}} \left \| \Delta^t_i - \Delta^t_\mathcal{B} \right \|^2 + \sum_{i \in \mathcal{P}} \left \|  \Delta^t_i - \Delta^t_\mathcal{B} \right \|^2 \right]\nonumber \\
        &\leq \mathbb{E}\frac{1}{|\mathcal{S}^t|} \left[ 8 |\mathcal{R}| c^2 + \left ( \frac{2 |\mathcal{R} |}{|\mathcal{P}|} + 1 \right) \sum_{i \in \mathcal{P}}\left \| \Delta^t_i- \Delta^t_\mathcal{B} \right \|^2 \right]\nonumber \\
        &\leq \mathbb{E}\frac{1}{|\mathcal{S}^t|} \left[ 8 |\mathcal{R}| c^2 + \left ( \frac{2 |\mathcal{R} |}{|\mathcal{P}|} + 1 \right) |\mathcal{B}|  (2 \bar{\nu} + \bar{\zeta})\right]\nonumber \\
        &= \frac{|\mathcal{B}|}{|\mathcal{S}^t|}\left ( \frac{2 |\mathcal{R} |}{|\mathcal{P}|} + 1 \right)   (2 \bar{\nu} + \bar{\zeta})   + \frac{8 |\mathcal{R}| c^2}{|\mathcal{S}^t|} \nonumber \\
        &\leq \frac{|\mathcal{B}|}{|\mathcal{S}^t|}\left ( \frac{2}{\epsilon} + 1 \right)   (2 \bar{\nu} + \bar{\zeta})   + \frac{8 |\mathcal{R}| c^2}{|\mathcal{S}^t|}, \label{kappa_in_2}
\end{align}
where the first inequality follows \autoref{Lemma_for_extract_sum}, the second inequality holds as the model updates in $\mathcal{S}^t$ is bounded by $c$, the third inequality follows \autoref{local_divergence}. 

\noindent Summarizing \ineqautoref{kappa_in_1} and \ineqautoref{kappa_in_2}, we have
\begin{align}
    \mathbb{E} \left  \| \Delta^t - \Delta^t_\mathcal{B} \right \|^2 & \leq \nonumber \begin{cases}
            \frac{ n-m}{n-2m} \left (2\bar{\nu} + \bar{\zeta} \right) , & \text{if} \ \mathcal{S}^t \subseteq \mathcal{B}\\ \nonumber \\
            \frac{ n-m}{n-2m} \left (\frac{2}{\epsilon} + 1\right)(2\bar{\nu} + \bar{\zeta})  + \frac{8mc^2}{n-2m}, & \text{if} \ \mathcal{S}^t \nsubseteq \mathcal{B} 
        \end{cases}\nonumber\\ \nonumber \\
        &\leq \frac{ n-m}{n-2m} \left (\frac{2}{\epsilon} + 1\right)(2\bar{\nu} + \bar{\zeta})  + \frac{8mc^2}{n-2m} \nonumber \\
        & \leq \left( 1 + \frac{m}{n-2m} \right) \left( \left (\frac{2}{\epsilon} + 1 \right) (2\bar{\nu} + \bar{\zeta}) + 8 c^2 \right),
\end{align}
which concludes the proof.
\end{proof}

\subsection{Proof of \autoref{lemma:AlignInscertified_radius}}
\label{apdx: proof_of_pro_error}
\begin{proof}
    We use $\theta$ to denote the model trained over $[n]$ which contains $\mathcal{B} \in [n], \mathcal{M} \in [n]$ where $\mathcal{B}$ is the set of benign clients and $\mathcal{M}$ is the set of malicious clients. Obviously, $\mathcal{B} \cup \mathcal{M} = [n]$ and $\mathcal{B} \cap \mathcal{M} = \emptyset $. We use $\theta^*$ to denote the clean model which is trained over $\mathcal{B}$. The update rules for $\theta$ and $\theta^*$ are as follows.
\begin{align}
    \theta^{t+1} &= \theta^t - \alpha\Delta^t \label{norupdaterule} \\
 \theta^{t+1, *} &= \theta^{t, *} - \alpha\Delta^{t, *} \label{opupdatrule}.
\end{align} 
With \eqautoref{norupdaterule} and \eqautoref{opupdatrule}, we have
    \begin{align}
        \left \| \theta^{t+1} - \theta^{t+1, *} \right\|^2 &= \left \| \theta^t - \alpha \Delta^t - (\theta^{t, *} - \alpha\Delta^{t, *}) \right \|^2 \nonumber \\
        &= \left \| \theta^t - \theta^{t, *} + \alpha \Delta^t - \alpha\Delta^{t, *} \right \|^2 \nonumber \\
        &\leq 2\left \| \theta^t - \theta^{t, *}  \right \|^2 +\underbrace{2 \alpha^2\left \| \Delta^t - \Delta^{t, *}  \right \|^2}_{T_1},
    \end{align}
    where the first inequality follows \autoref{Lemma_for_break_norm}.
    
    \noindent Now, we treat $T_1$. As $\Delta^{t, *} = 1/|\mathcal{B}| \sum_{i \in \mathcal{B} }\Delta^{t, *}_i$, let $\Delta^t_\mathcal{B} = 1/|\mathcal{B}| \sum_{i \in \mathcal{B} }\Delta^{t}_i$, we have
    \begin{align}
        T_1 &= 2\alpha^2\left \| \Delta^t - \Delta^t_\mathcal{B} + \Delta^t_\mathcal{B} - \Delta^{t, *} \right \|^2 \nonumber \\
        &\leq \underbrace{4 \alpha^2 \left \|  \Delta^t - \Delta^t_\mathcal{B} \right \|^2}_{T_2} + \underbrace{4 \alpha^2 \left \|  \Delta^t_\mathcal{B} - \Delta^{t, *} \right \|^2}_{T_3}, \label{inter_5}
    \end{align}
    where the first inequality follows \autoref{Lemma_for_break_norm}.
    
    \noindent We now treat $T_2$, $T_3$, respectively. For $T_2$, given that $\Delta^t = F(\Delta^t_1, \Delta^t_2, \dots, \Delta^t_n)$, we have
    \begin{align}
        T_2 = 4 \alpha^2 \left \|  \Delta^t - \Delta^t_\mathcal{B} \right \|^2 \leq 4\alpha^2 \kappa, \label{inter_4}
    \end{align}
    where the first inequality follows \autoref{lemma:AlignInskappa} in the paper. Define $ \Delta_{\mathcal{B}}:= \frac{1}{|\mathcal{B}|}\sum_{i\in\mathcal{B}} \Delta^t_i = \frac{1}{|\mathcal{B}|}\sum_{i\in\mathcal{B}} \eta g^t_i$ . For $T_3$, we have
    \begin{align}
        T_3 &=4 \alpha^2\left \|  \Delta^t_\mathcal{B} - \Delta^{t, *} \right \|^2 = 4 \alpha^2\eta^2 \left \| g^t_\mathcal{B} - g^{t, *} \right \|^2 \nonumber \\
        &=4 \alpha^2\eta^2 \left \| g^t_\mathcal{B} - \nabla \mathcal{L}_{\mathcal{B}}(\theta^t) + \nabla \mathcal{L}_{\mathcal{B}}(\theta^t) - g^{t, *} - \nabla \mathcal{L}_{\mathcal{B}}(\theta^{t, *}) + \nabla \mathcal{L}_{\mathcal{B}}(\theta^{t,*})\right \|^2 \nonumber \\
        &=4 \alpha^2\eta^2 \left \| g^t_\mathcal{B} - \nabla \mathcal{L}_{\mathcal{B}}(\theta^t) - \left(g^{t, *}- \nabla \mathcal{L}_{\mathcal{B}}(\theta^{t,*})\right) + \nabla \mathcal{L}_{\mathcal{B}}(\theta^t) - \nabla \mathcal{L}_{\mathcal{B}}(\theta^{t, *}) \right \|^2 \nonumber \\
        &\leq \underbrace{12 \alpha^2\eta^2\left \| g^t_\mathcal{B} - \nabla \mathcal{L}_{\mathcal{B}}(\theta^t)\right \|^2}_{T_4} + \underbrace{12 \alpha^2\eta^2 \left \| \left(g^{t, *}- \nabla \mathcal{L}_{\mathcal{B}}(\theta^{t,*})\right)\right \|^2}_{T_5} + \underbrace{12 \alpha^2\eta^2\left \| \nabla \mathcal{L}_{\mathcal{B}}(\theta^t) - \nabla \mathcal{L}_{\mathcal{B}}(\theta^{t, *}) \right \|^2}_{T_6}, \label{inter_3}
    \end{align}
    where the first inequality follows \autoref{Lemma_for_break_norm}.
    For $T_4$, we have
    \begin{align}
        T_4 &= 12 \alpha^2\eta^2\left \| g^t_\mathcal{B} - \nabla \mathcal{L}_{\mathcal{B}}(\theta^t)\right \|^2 \nonumber 
        = 12 \alpha^2\eta^2 \left \| \frac{1}{|\mathcal{B}|} \sum_{i \in \mathcal{B}} g^t_i - \frac{1}{|\mathcal{B}|} \sum_{i \in \mathcal{B}} \nabla \mathcal{L}_i(\theta^t_i) \right \|^2 \nonumber = 12 \alpha^2\eta^2 \left \| \frac{1}{|\mathcal{B}|} \sum_{i \in \mathcal{B}} \left ( g^{t}_i -  \nabla \mathcal{L}_i(\theta^t_i) \right ) \right \|^2  \nonumber \\
        &\leq \frac{12 \alpha^2\eta^2}{|\mathcal{B}|} \sum_{i \in \mathcal{B}} \left \| g^t_i - \nabla \mathcal{L}_i (\theta^t_i) \right \|^2 
        = \frac{12 \alpha^2\eta^2}{|\mathcal{B}|} \sum_{i \in \mathcal{B}} \left \| \sum^{\tau - 1}_{s=0}g^{t, s}_i - \sum^{\tau - 1}_{s=0}\nabla \mathcal{L}_i (\theta^{t, s}_i) \right \|^2 \nonumber \\
        &\leq \frac{12 \alpha^2\tau \eta^2}{|\mathcal{B}|} \sum_{i \in \mathcal{B}}\sum^{\tau - 1}_{s=0} \left \| g^{t, s}_i - \nabla \mathcal{L}_i (\theta^{t, s}_i) \right \|^2 
        \leq 12 \alpha^2\tau \eta^2 \sum^{\tau - 1}_{s=0} \bar{\nu} \nonumber \\
        &= 12 \alpha^2\tau^2 \eta^2 \bar{\nu},\label{proof_lemma_cr_t4}
    \end{align}
    where the both first and second inequality follow \autoref{Lemma_for_extract_sum}, the third inequality follows \autoref{ass2}. 
    
    \noindent Similarly, we have 
    \begin{align}
        T_5 \leq 12 \alpha^2\tau^2 \eta^2 \bar{\nu}. \label{proof_lemma_cr_t5}
    \end{align}
    For $T_6$, we have
    \begin{align}
        T_6 = 12 \alpha^2\eta^2 \left \| \nabla \mathcal{L}_{\mathcal{B}}(\theta^t) - \nabla \mathcal{L}_{\mathcal{B}}(\theta^{t, *}) \right \|^2 
        \leq 12 \alpha^2\eta^2 \mu^2 \left \| \theta^t - \theta^{t, *} \right \|^2, \label{proof_lemma_cr_t6}
    \end{align}
    where the first inequality follows \autoref{ass1}. 
    
    \noindent Plugging \ineqautoref{proof_lemma_cr_t6}, \ineqautoref{proof_lemma_cr_t5}, and \ineqautoref{proof_lemma_cr_t4} back to \ineqautoref{inter_3}, we have:
    \begin{align}
        T_3 \leq 24 \alpha^2\tau^2 \eta^2 \bar{\nu} + 12 \alpha^2\eta^2 \mu^2 \left \| \theta^t - \theta^{t, *} \right \|^2. \label{inter_6}
    \end{align}
    Plugging \ineqautoref{inter_6}, \ineqautoref{inter_4} back to \ineqautoref{inter_5}, we have
    \begin{align}
        T_1 \leq 4 \alpha^2 \kappa + 24 \alpha^2 \tau^2 \eta^2 \bar{\nu} + 12\alpha^2 \eta^2 \mu^2 \left \| \theta^t - \theta^{t, *} \right \|^2.
    \end{align}
    Therefore, we have
    \begin{align}
        \left \| \theta^{t+1} - \theta^{t+1, *} \right\|^2 
        &\leq 2\left \| \theta^t - \theta^{t, *}  \right \|^2 +4 \alpha^2 \kappa + 24 \alpha^2 \tau^2 \eta^2 \bar{\nu} + 12\alpha^2 \eta^2 \mu^2 \left \| \theta^t - \theta^{t, *} \right \|^2 \nonumber \\
        & = (2 + 12 \alpha^2 \eta^2 \mu^2) \left \| \theta^t - \theta^{t, *} \right \|^2 + 4 \alpha^2 (\kappa + 6 \tau^2 \eta^2 \bar{\nu}) \nonumber \\
        &\leq (2 + 3 \alpha^2 \tau^{-2} \mu^2) \left \| \theta^t - \theta^{t, *} \right \|^2 + 4 \alpha^2 (\kappa +2 \bar{\nu}) \nonumber \\
        &\leq (2 + 3 \alpha^2 \mu^2) \left \| \theta^t - \theta^{t, *} \right \|^2 + 4 \alpha^2 (\kappa +2 \bar{\nu}), \label{convergence_final}
    \end{align}
    where the second inequality follows $\eta \leq 1/2\tau$, and the last inequality holds as $\tau^{-2} \leq 1$.
    
    \noindent We inductively prove the \autoref{lemma:AlignInscertified_radius}, assume for $T-1$ the statement of Lemma holds. Let $\phi(T) = \sum_{i=1}^ T (\alpha^i)^2$, by \ineqautoref{convergence_final}, we have
    % Let $\gamma^t:=2 + 3 (\alpha^t)^2\tau^{-2} \mu^2$ and $\beta: 4 (\alpha^t)^2 (\kappa + 2\bar{\nu})$, we have
    \begin{align}
         \left \| \theta^{T} - \theta^{T, *} \right\|^2 \leq (2 + 3 \mu^2 (\alpha^T)^2) \phi(T-1)(2 + 3 \mu^2 )^{\phi(T-1)}(\kappa + 2 \bar{\nu}) + (\kappa + 2 \bar{\nu})(\alpha^T)^2.
    \end{align}
    \noindent By Bernoulli's inequality we have
    \begin{align}
        \left \| \theta^{T} - \theta^{T, *} \right\|^2 &\leq \phi(T-1)(2 + 3 \mu^2 )^{\phi(T-1) + (\alpha^T)^2}(\kappa + 2 \bar{\nu}) + (\kappa + 2 \bar{\nu})(\alpha^T)^2 \nonumber \\
        &= \phi(T-1)(2 + 3 \mu^2 )^{\phi(T) }(\kappa + 2 \bar{\nu}) + (\kappa + 2 \bar{\nu})(\alpha^T)^2 \nonumber \\
        &\leq (\phi(T-1) + (\alpha^T)^2)(2 + 3 \mu^2 )^{\phi(T) }(\kappa + 2 \bar{\nu}) \nonumber \\
        &\leq \phi(T)(2 + 3\mu^2 )^{\phi(T) }(\kappa + 2 \bar{\nu}),
    \end{align}
    which concludes the proof.
    \end{proof}

\end{document}